 \newtheorem{thm}{\textbf{Theorem}}[section]
 \newtheorem{lem}{\textbf{Lemma}}[section]
 \newtheorem{defn}{\textbf{Definition}}[section]
 \newtheorem{rem}{Remark}[section]
 \newcommand{\X}{\mathcal{X}}
 \newcommand*{\QEDB}{\hfill\ensuremath{\square}}
\newcommand{\real}{\mathbb{R}}
\newcommand{\natno}{\mathbb{N}}
\newcommand{\card}[1]{\left|#1\right|}
\newcommand{\prob}{\text{Pr}}
\newcommand{\pfsa}{T}
\newcommand{\state}{q}
\newcommand{\stateSet}{Q}
\newcommand{\alphabet}{a}
\newcommand{\alphabetSet}{\mathcal{A}}
\newcommand{\symb}{a}
\newcommand{\trFn}{\delta}
\newcommand{\emProb}{\tilde{\pi}}
\newcommand{\emMat}{\bm{\tilde{\Pi}}}
\newcommand{\depth}{D}
\newcommand{\trProb}{\pi}
\newcommand{\trMat}{\bm{\Pi}}
\newcommand{\stProb}{p}
\newcommand{\stProbVec}{\bm{\stProb}}
\title {
{\LARGE \bf
Markov Modeling of Time-Series Data using Symbolic Analysis}%$^\bigstar$}
}
\author{Devesh K. Jha}
\address{MERL, Cambridge, MA}
\email{ devesh.dkj@gmail.com, jha@merl.com}
\begin{document}

%\begin{document}

\maketitle
\pagestyle{plain}

%\linenumbers           % Insert line numbers
\begin{abstract}
Markov models are often used to capture the temporal patterns of sequential data for statistical learning applications. While the Hidden Markov modeling-based learning mechanisms are well studied in literature, we analyze a symbolic-dynamics inspired approach. Under this umbrella, Markov modeling of time-series data consists of two major steps- discretization of continuous attributes followed by estimating the size of temporal memory of the discretized sequence. These two steps are critical for the accurate and concise representation of time-series data in the discrete space. Discretization governs the information content of the resultant discretized sequence. On the other hand, memory estimation of the symbolic sequence helps to extract the predictive patterns in the discretized data. Clearly, the effectiveness of signal representation as a discrete Markov process depends on both these steps. In this paper, we will review the different techniques for discretization and memory estimation for discrete stochastic processes. In particular, we will focus on the individual problems of discretization and order estimation for discrete stochastic process. We will present some results from literature on partitioning from dynamical systems theory and order estimation using concepts of information theory and statistical learning. The paper also presents some related problem formulations which will be useful for machine learning and statistical learning application using the symbolic framework of data analysis. We present some results of statistical analysis of a complex thermoacoustic instability phenomenon during lean-premixed combustion in jet-turbine engines using the proposed Markov modeling method.

%Then, we will present a Bayesian approach for Markov modeling of time series data using concepts from Kolmogorov Complexity theory and the principle of Minimum description length. We will present results of the proposed approach on various data sets for prognostics and health monitoring of combustion instability in gas turbine engines and fatigue damage in polycrystalline alloy structures. 
\end{abstract}

\section{Motivation and Introduction}
Markov models are used for statistical learning of sequential data when we are interested in temporal behavior of data and there is a requirement to relax the i.i.d. assumptions on measured data. Various different models have been proposed and analyzed in literature. Some of the common approaches are Autoregressive or AR models~\cite{BJRL15} and the Hidden Markov models~\cite{B06}. In contrast, in this paper we are interested in a non-linear symbolic analysis-based Markov modeling of time-series data which is not so well studied in literature.  Symbolic Time Series Analysis (STSA)~\cite{DFT03} is a non-linear technique for representing temporal patterns in sequential data. STSA borrows concepts from symbolic dynamics and chaos theory for non-linear analysis of irregular time series data  for systems which inherently do not seem stochastic. Symbolic time series analysis consists of two critical steps, \textit{discretization} where the continuous attributes of the sequential data are projected onto a symbolic space which is followed by identification of concise probabilistic patterns that help compress the discretized data. Under this umbrella, \textit{finite-memory} Markov models have been shown to be a reasonable \textit{finite-memory} approximation (or representation) of systems with fading memory (e.g., engineering systems that exhibit stable orbits or mixing)~\cite{R04, MR14}. Once the continuous data is discretized, the memory estimate for the discretized sequence is used to compress it as a finite-memory Markov process, which is represented by a state transition matrix. This helps find the causal, dynamical structure intrinsic to the symbolic process we are investigating, ideally to extract all the patterns in it that may have any predictive power.  The transition matrix could be estimated by frequency counting under the assumption of infinite data and a uniform prior for all the elements of the transition matrix. It is noted that in this paper we only consider discrete Markov processes with finite memory (or order). 

It is desirable from a machine learning perspective to establish the fundamental limits of information that could be learned from the underlying data when it is compressed as a Markov chain using the symbolic analysis framework. This problem is complex from the following perspectives.
\begin{enumerate}
\item The underlying model for data is unknown which makes data discretization difficult to analyze.
\item The order of the discrete stochastic process depends on the discretization parameters (i.e., number and locations of partitions) and this coupling is difficult to analyze in the absence of any underlying model. 
\end{enumerate}
These two reasons make analysis of this problem very difficult from a dynamical systems perspective. As the data model is unknown, there is no metric to characterize the properties of the discrete sequence w.r.t. the original data. Furthermore, the behavior of the discrete sequence is governed by the discretization process. As a result, characterizing the final Markov model w.r.t. the original data is extremely difficult due to this composite process. Dynamical systems theory provides some consistent characterization of partitions but they are in general very difficult to estimate numerically in the absence of a model. On the other hand, there is no information-theoretic or statistical formalism for the discretization process in open literature. However, in the absence of a model, use of information-theoretic and statistical measures for data discretization seems natural.

While the discretization step decides the information content of the symbolic sequence, memory estimation is critical for concise, yet precise, representation of the discretized sequence. A lot of techniques could be found in open literature for discretization as well as memory estimation of Markov processes. For machine learning applications, most of the methods tie a technique with an end objective which can then be used to find a solution that satisfies the end objective. However, most approaches consider the problems of data discretization and memory estimation as separate; there is no unifying theory for signal representation as a Markov model. Moreover, there are no results on the interplay between the complexity of the discrete dynamical system and the discretization process e.g., the cardinality of the discrete set, the discretization technique, etc.. In particular, there is no Bayesian inference technique that combines these steps together to estimate the model, together with its parameters, for representation of the time series signal. It is noted that the word \textit{symbolization} is often interchangeably used for \textit{discretization} throughout the paper. The word \textit{order} is often interchangeably used for \textit{depth}.

Symbolization is carried out via partitioning of the phase-space of the system
within which the system dynamics evolves. In general, there are two main lines of thought behind the symbolization process, one inspired by dynamical systems theory and the other inspired by machine learning objectives.  Some approaches inspired by the dynamical systems theory could be found in~\cite{HJK04, BK05, A98, KB03}. Several partitioning methods have been
proposed in literature such as maximum entropy partition~\cite{RR06}, symbolic aggregate approximation (SAX)~\cite{SAX07},
maximally-bijective partition~\cite{SSS13}. In these approaches, the discretization is not tied to any performance objective expected from the data (e.g., in maximum entropy partitioning, each partition contains equal number of data points and thus it presents an unbiased discrete representation of the system). This is somewhat different from machine learning-inspired techniques where an optimization over the parameters of partitions is performed for near-optimal results; however, these techniques are always susceptible to over-fitting of data and may lead to overly-capable algorithms.  Most of the machine learning inspired discretization techniques are mainly based on some end objectives like class separability or some unsupervised metrics like entropy minimization of the resulting discrete data~\cite{YW02, F04}. Good reviews of partitioning methods
for time-series symbolization can be found in~\cite{LHTD2002, GLSLH2012}. A information theory-based approach to select alphabet size was presented recently in~\cite{SCRPL15}. Some empirical results regarding the comparison and performance of different methods have been reported in~\cite{SMEJ01, DKS95, P95}. Even though a lot of techniques have been reported in open literature, there is no standard approach to it. This is for the reason that the effectiveness of a discretization technique depends on a lot of factors like the nature of the dynamical system, the topological properties of the phase space of the system, the similarity metrics etc.. Moreover, in statistical learning problems, the underlying model generating the data is unknown and thus, it is difficult to impose a consistency criterion based on system behavior. Consequently, finding a universal rule which can be used for a wide variety of systems with consistent performance is evasive. To the best of author's knowledge, there are no consistency results reported in open literature for discretization of signals for representation and learning.%Some empirical results about the 

The next step in the process is modeling of the symbol sequence which allows further compression of the data as a Markov model. Working in the symbolic domain, the task is to
identify concise probabilistic models that relate the past, present and the
future states of the system under consideration. For Markov modeling of the symbol sequence, this is achieved by first estimating the depth (or size of memory) for the discrete symbolic process and then, estimating the approximate the stochastic model from the observed sequence. Various approaches have been reported in open literature for order estimation of Markov chains. Most of these approaches are inspired by information theory and makes use of concepts like the Minimum description length (MDL)~\cite{BRY98, CT06}, complexity theory~\cite{C12} and/or some information criterion like Akaike's Information Criterion (AIC)~\cite{T75} or Bayesian Information Criterion (BIC)~\cite{K81, CT06}. Some important results regarding consistency of these techniques have also been reported for order estimation. In~\cite{CS00, C02}, the authors show that the minimum description length Markov estimator will converge almost surely to the correct order if the alphabet size is bounded a priori. In~\cite{MW07, MW071, MW08, MW13}, the authors present a more direct way of order estimation by making use of convergence rate of $k$-block distributions. Some other techniques techniques based on the use of some heuristic information theory-based criterion for order estimation could be found in~\cite{PK13, ZDG01, MGZ89}. For machine learning application, the estimation process follows {\it wrapper} approach~\cite{B06} where a
wrapper search algorithm with a certain stopping criteria calls the main modeling
module to build several temporal models with varying depths and the search is stopped
when the stopping criteria such as information gain or entropy rate show
marginal improvement with the added complexity~\cite{RRSM07}. This approach
in essence creates all models first and then chooses one model based on some
given metric and threshold. Making multiple passes through the
data {\it searching} for correct depth is computationally intensive
and might be infeasible for large data sets that are common today.

Recently, a new approach for depth estimation has been proposed based on the
spectral properties of the one-step transition matrix~\cite{Srivastav2014}. An
upper bound on the size of temporal memory has been derived that requires a
single-pass through the time-series data. In~\cite{JSMR15}, the authors presented a rigorous
comparative evaluation of spectral property-based approach with three popular techniques
-- (1) log-likelihood, (2) state-splitting using entropy-rate, and (3) signal
reconstruction error based approach. The latter three techniques fall under the
wrapper approach of depth estimation and therefore computationally. They were found to be in close agreement about the estimated depth. 

Once these two parameters are estimated, the data is represented by a stochastic matrix for the inferred Markov chain. The stochastic matrix could be estimated by a Bayesian approach where the prior and posterior of the individual elements of the matrix can be represented by Dirichlet and Multinomial distributions, respectively. Under the assumption of finite state-space of the Markov chain and an uniform prior, it can be shown that the posterior estimates asymptotically converge to the Maximum Likelihood estimate for the individual elements. Thus, through this sequential process of discretization and order estimation, we render a generative Markov model for the data. This representation for the signal can then be  used for various learning algorithms like modeling, classification, pattern matching etc.. 

Even though we see that a lot of work has been done on discretization and approximation of discrete sequences as finite-order Markov chains, there is no theory or approach that ties them together for signal representation. Furthermore, as these pieces have not been studied together there is no result on consistency of signal representation as a Markov model, where the underlying model generating the data is unknown. As a result, statistical modeling of data for STSA-based Markov modeling is still largely practiced in an ad-hoc fashion following a \textit{wrapper}-inspired technique where the solution space is searched exhaustively for solutions and thus, still remains hugely dependent on domain expertise. The challenges are both algorithmic and computational. The algorithmic challenge is to synthesize a consistent framework with guarantees on performance while the computational challenge is to reduce computations required to arrive at a desired solution mainly inspired by machine learning applications. 

At this point we would like to point out that the presented formalism for statistical learning is different from the standard Hidden Markov models in the following ways.
\begin{itemize}
\item The state-space of these models is inferred from the observed data while in HMM, the state-space is never observed and thus it is difficult to infer.
\item Under the assumption that the observed sequence is a finite-order Markov chain, the estimation of parameters is simplified. The sufficient statistics could be obtained by estimating the order of the Markov chain and the symbol emission probabilities conditioned on the memory words of the discrete sequence. 
\end{itemize}  
Thus, the current framework of STSA is simplistic and can be easily used for embedded applications for the simplicity of the inference algorithms. The details are provided in the subsequent sections.

 In this paper, we will present the state-of-the-art mathematical formalism of these concepts which may be possibly used for signal representation. We present a review and propose some mathematical problems which are required to understand the symbolic time-series analysis framework. In order to do so, we will review and discuss the concept of Markov partitions from dynamical systems literature and order estimation of Markov chains from probability and information theory. We will study some properties of Markov partitions and present an estimation technique for order estimation of a Markov chain that guarantees asymptotic convergence to the true order of the Markov chain. The closure of the paper will discuss some possible directions of research. In particular, we will discuss some possible ways in the underlying framework of STSA can be modeled from an information-theoretic and statistical perspective.
 
The paper is organized in eight  sections including the current section. Section~\ref{sec:background} briefly presents the underlying mathematical preliminaries and prepares the background for presentation of the subsequent material. Section~\ref{sec:probform} presents the statement of the problem along with a list of major assumptions. Section~\ref{sec:partitions} presents some results on a particular type of well-behaved partitions studied in dynamical systems literature. Section~\ref{sec:OrderEstimation} presents an estimation procedure for order of a discrete Markov chain with guarantees on convergence of the estimates. Section~\ref{sec:Parameters} presents some details on estimation of statistical parameters for the Markov models. In section~\ref{sec:results}, we present some details of application to a complex thermoacoustic phenomena in combustion using the presented ideas of Markov modeling. Finally the paper is summarized and concluded in Section~\ref{sec:Conclusions} along with recommendations for future research.
% In particular, we present the mathematical formalisms using ideas from complexity theory for signal compression. Furthermore, we introduce various different measures which can be used to study the performance of the Markov models created from a data-driven perspective. The presented ideas are tested on data sets from various complex physical processes like thermoacoustic instability in combustion process, failure of polycrystalline alloys due to fatigue, complex two-phase fluid flows etc.. The effectiveness of the proposed methods is illustrated through various measures defined for modeling accuracy from a data-driven perspective and shows the advantages over the existing algorithms for compression using STSA.

\section{Mathematical Preliminaries and Background}\label{sec:background}
In this section, we explain briefly the related ideas and concepts for symbolic time-series analysis-based Markov modeling of data. In particular, we will introduce the concepts of discretization, order (memory) and Markov modeling. Symbolic analysis of time-series data is a recent approach where continuous
sensor data are converted to symbol sequences via partitioning of the continuous
domain~\cite{SAX07, RRSM07, CRSK10}. The discrete symbolic dynamic process is then compressed as a Markov chain whose states are collection of words over a finite alphabet with finite length. In this section, we provide briefly the related concepts which are required to explain the subsequent material. We present some relevant definitions and concepts which are used to describe the subsequent materials. 
\begin{defn}[\textbf{Pre-Image}]
Given $\phi:M \rightarrow L$ (assume $\phi$ is surjective), the image of $x \in M$ is $\phi(x) \in L$. Then, the pre-image of $y \in L$ is given by the set $\phi^{-1}(y) = \{x\mid \phi(x)=y \}$.
\end{defn}
\begin{defn}[\textbf{Homeomorphism}] \label{def:homeomorphism}
A function $\phi:M \rightarrow N$ from one metric space to another is continuous if, whenever $x_n\rightarrow x$ in $M$, then $\phi(x_n)\rightarrow \phi(x)$ in $N$. If $\phi$ is continuous, one-to-one, onto and has a continuous inverse, then we call $\phi$ a homeomorphism.
\end{defn}

\begin{defn}[\textbf{Dynamical System}]\label{def:dynsys}
A dynamical system $(M,\phi)$ consists of a compact metric space $M$ together with a continuous map $\phi:M\rightarrow M$. If $\phi$ is homeomorphism we call $(M,\phi)$ to be an invertible dynamical system.
\end{defn}

\begin{defn}[\textbf{Irreducible Dynamical System}]
A dynamical system $(M,\phi)$ is said to be irreducible if for every pair of open sets $U,V$ there exists $m\geq 0$ such that $\phi^mU \cap V \neq 0$.
\end{defn}

\begin{defn}[\textbf{Bilaterally Transitive}]
A point $p \in M$ is called to be \textit{bilaterally transitive} if the forward orbit $\{ \phi^n p\mid n>0 \}$ and the backward orbit $\{ \phi^n p \mid n<0\}$ are both dense in $M$. 
\end{defn}

A homeomorphism $\phi$ is said to be expansive if there exists a real number $c>0$ such that $d(\phi^np,\phi^nq)<c$ for all $n \in \mathds Z$, then $p=q$.\\
\begin{defn}
For two dynamical systems $(M,\phi)$ and $(L,\psi)$ we call the second a factor of the first and the first an expansion of the second if there exists a map $\pi$ of $M$ into $L$, which we call a factor map such that
\begin{itemize}
\item $\psi \pi =\pi \phi$.
\item $\pi$ is continuous.
\item $\pi$ is onto.
\end{itemize}
Furthermore, we say that $\pi$ is a finite factor map or that it is bounded one-to-one if
\begin{itemize}
\item there is a bound to the number of pre-images.
\item every doubly transitive point has a unique pre-image.
\end{itemize}
\end{defn}
We are interested in symbolic representation of dynamical systems $(M,\phi)$. To understand the idea, consider the case where $\phi$ is invertible so that all the iterates of $\phi$, positive and negative, are used. Then, to describe the orbits $\{\phi^n(y):n \in \mathds Z\}$ of points $y \in M$, we can try to use an approximate description constructed in the following way. Divide $M$ into a finite number of pieces $E_0,E_1,\dots, E_{r-1}$ which cover the space $M$ and are mutually disjoint. Then, we can track the orbit of a point $y \in M$ by keeping a record of which of these pieces $\phi^n(y)$ lands in. This yields a corresponding point $x=\dots x_{-1}x_0x_1\dots \in \{0,1,\dots r-1\}^{\mathds Z}= X_{[r]}$ (full $r$-shift space) defined by
\begin{equation}
\phi^n(y) \in E_{x_n} \text{ for } n\in \mathds Z \nonumber
\end{equation}
Thus for every $y\in M$, we get a point $x$ in the full $r$-shift, and the definition shows that the image $\phi(y)$ corresponds to the shifted point $\sigma(x)$.\\
More formally, the partitions can be defined as follows.

\begin{defn}[\textbf{Partitions}] We call a family of sets $\mathcal{R}=\{R_1,R_2,\dots,R_{N-1}\}$ a topological partition for a compact metric space $M$ if the following holds true.
\begin{enumerate}
\item each $R_i$ is open;
\item $R_i\cap R_j =\emptyset$, $i\neq j$;
\item $M=\overline{R}_1\cup \overline{R}_2\cup \dots \overline{R}_{N-1}$
\end{enumerate}
\end{defn}

As there are only a finite number of sets in the partition, every set is denoted by a symbol from a finite alphabet and thus the partitioning process for a dynamical system could also be viewed as identification of a many-to-one projection mapping such that the continuous dynamical system is mapped onto a discrete space. More formally, let the continuously varying physical process be modeled as a finite-dimensional dynamical system, $\dot{\mathbf{x}}=f(\mathbf{x}(t))$ where $t\in [0,\infty)$ and $\mathbf{x}\in \real^n$ is the state-vector in the compact phase space $\Omega$ of the system. Then, the partitioning process could be defined as a map $\varphi:\Omega \rightarrow \alphabetSet$   such that $\varphi(\mathbf{x})=\alphabet$ if $\mathbf{x}\in {R}_\alphabet$, where $R_\alphabet \subseteq \Omega$,  $\bigcup\limits_{\alphabet \in \alphabetSet}R_\alphabet=\Omega$.

Let $\alphabetSet$ denote the ordered set of $n$ symbols. The phase space of the symbolic system is the space.
\begin{equation}
X_{[n]}=\alphabetSet^{\mathds Z} =\{a =(a_k)_{k \in \mathds Z}\mid a_k \in \alphabetSet \} 
\end{equation}
of all bi-infinite sequences of elements from a set of $n$ symbols. The shift transformation $\sigma$ is defined by shifting each bi-infinite sequence one step to the left. This is expressed as 
\begin{equation}
(\sigma a)_k=a_{k+1} \nonumber
\end{equation}
The symbolic system $(X_{[n]},\sigma)$ is called the \textit{full n-shift}. Restricting the shift transformation to a full-shift $X_{[n]}$ to a closed shift-invariant subspace $X$, we get a very general kind of dynamical system $(X,\sigma)$ called a \textit{subshift}. Given a symbolic phase space $X$, we call a $k$-tuple an \textit{allowable $k$-block} if it equals $a_{[m,m+k-1]}$ for some sequence $a$. Then, we define a \textit{shift of finite type}, also called the \textit{topological Markov shift}, as the subshift (i.e., shift-invariant) of a full shift restricted to the set $X_G$ of bi-finite paths in a finite directed graph $G$ derived from a full one by possibly removing some edges. The space could also be denoted by $X_{V}$ where $V$ is a matrix and $v_{ij}$ denotes the number of edges going out from node $i$ to node $j$. This gives the resemblance to a Markov chain as normalizing each row of $V$ gives a stochastic matrix which defines a Markov chain over $G$. Before we move to more details of theory of discrete ergodic processes, which provides a statistical view to the discrete stochastic processes, we would like to clarify that a symbolic sequence in a topological Markov shift is bilaterally transitive if every admissible block appears in both directions and infinitely often. In the subsequent section we use the notation $BLT(A)$ to denote the subset of bilaterally transitive points in $A\subseteq M$.\\ 
A (discrete-time, stochastic) process is a sequence $S_1, S_2, \dots, S_n, \dots$ of random variables defined on a probability space $(S,\mathcal{E}, P)$. The process has \textit{alphabet} $\alphabetSet$ if the range of each $S_i$ is contained in $\alphabetSet$. In this paper, we focus on finite-alphabet processes, so, unless otherwise specified, ``process" means a discrete-time finite-alphabet process. Also, unless it is specified otherwise, ``measure" will mean ``probability measure" and ``function" will mean ``measurable function" w.r.t. some appropriate $\sigma$-algebra on a probability space. The cardinality of a finite set $A$ is denoted by $|A|$. The sequence $a_m,a_{m+1},\dots, a_n$, where each $a_i \in \alphabetSet$ is denoted by $a_m^n$; the corresponding set of all such $a_m^n$ is denoted by $\alphabetSet_m^n$ except for $m=1$, when $\alphabetSet^n$ is used. The $k^{th}$ order joint distribution of the process $\{S_k\}$ is the measure $P_k$ on $\alphabetSet^k$ defined by the formula
\begin{equation}
P_k(a_1^k)=\prob(S_1^k=a_1^k), a_1^k\in \alphabetSet^k
\end{equation}

\begin{defn}[\textbf{Stationary Process}] A process is stationary, if the joint distribution do not depend on the choice of the time origin, i.e., 
\begin{equation}
\prob(S_i=a_i, m\leq i\leq n)=\prob(S_{i+1}=a_i, m\leq i \leq n),
\end{equation}
for all $m,\quad n$ and $a_m^n$. 
\end{defn}
These stationary finite-alphabet processes serve as models in many settings of interest like physics, data transmission, statistics, etc.. In this paper, we are mainly interested in statistical and machine learning applications like density estimation, pattern matching, data clustering, and estimation.

The simplest example of a stationary finite-alphabet process is an independent, identically distributed distributed (i.i.d.) process.  A sequence of random variables, $\{S_n\}$ is independent if
\begin{equation}
\prob (S_n=a_n|S_1^{n-1}=a_1^{n-1})=\prob (S_n=a_n)
\end{equation}
holds for all $n \geq 1$ and all $a_1^n \in \alphabetSet^n$. It is identically distributed if $\prob(S_n=a)=\prob(S_1=a)$ holds for all $n \geq 1$ and for all $a \in \alphabetSet$. An independent process is stationary if and only if it is identically distributed. The simplest example of dependent finite-alphabet processes are the Markov processes. 
\begin{defn}[\textbf{Markov Chain}] \label{def:Markovchain} A sequence of random variables, $\{S_n\}$, is a Markov chain if
\begin{equation}
\prob(S_n=a_n|S_1^{n-1}=a_1^{n-1})=\prob(S_n=a_n|S_{n-1}=a_{n-1})
\end{equation}
holds for all $n\geq 2$ and all $a_1^n\in \alphabetSet^n$. A Markov chain is known as homogeneous or to have stationary transitions if $\prob(S_n=b|S_{n-1}=a)$ does not depend on $n$, in which case the $|A| \times |A|$ matrix $M$ defined by 
\begin{equation}
M_{b|a}=\prob(S_n=b|S_{n-1}=a)
\end{equation}
is called the transition matrix for the chain.
\end{defn}

A generalization of the Markov property allows dependence on $k$ steps in the past. A process is called $k-$step  (or $k-$order) Markov if
\begin{equation}
\prob (S_n=a_n|S_1^{n-1}=a_1^{n-1})=\prob (S_n=a_n|S_{n-k}^{n-1}=a_{n-k}^{n-1})
\end{equation}
holds for all $n >k$ and for all $a_1^n$.

The symbolic time-series analysis is initialized by partitioning the phase space of a dynamical system which is a non-linear mapping from a continuous space to a discrete space. In machine learning literature, discretization is generally studied as a feature extraction technique. In dynamical system literature, the partitioning or discretization is characterized by the extent to which a dynamical system can be represented by a symbolic one.

%\begin{defn}(Partition) \label{def:partition} A partition of a non-empty set $X$ is a set of non-empty subsets of $X$ such that every element $x$ in $X$ is in exactly one of these subsets. A family of sets $P$ is said to be a partition of set $X$ iff all of the following conditions hold.
%\begin{itemize}
%\item $\emptyset \not\in P$
%\item $\bigcup\limits_{\mathcal{X}\in P} \mathcal{X}=X$
%\item If $\mathcal{X},\tilde{\mathcal{X}} \in P$ and $\mathcal{X}\neq \tilde{\mathcal{X}}$, then $\mathcal{X}\cap \tilde{\mathcal{X}} = \emptyset$
%\end{itemize}
%\end{defn}
% and the elements of $\alphabetSet$ are mutually disjoint.

The data is partitioned based on the choice of a partitioning technique and then, the dynamics of the discrete process is studied. The dynamics of the symbols sequences are
compressed as a Probabilistic Finite State Automata (PFSA), which is defined
as follows:

\begin{defn}[\textbf{PFSA}]\label{defn:PFSA}
A Probabilistic Finite State Automata (PFSA) is a tuple $\pfsa =\{ \stateSet, \alphabetSet, \trFn, \emMat\}$ where
\begin{itemize}
\item $\stateSet$ is a finite set of states of the automata;
\item $\alphabetSet$ is a finite alphabet set of symbols $\alphabet \in \alphabetSet$;
\item $\trFn: \stateSet \times \alphabetSet \rightarrow \stateSet$ is the state transition function;
%\item $\state_{0}$ is the initial state
\item {$\emMat: \stateSet \times \alphabetSet \rightarrow [0, 1]$ is the
  $\card{\stateSet}\times\card{\alphabetSet}$ emission matrix. The matrix
  $\emMat = [\emProb_{ij}]$ is row stochastic such that $\emProb_{ij}$ is the
  probability of generating symbol $\alphabet_{j}$ from state $\state_{i}$}.
%\item {$\trMat: \stateSet\times\stateSet \rightarrow [0, 1]$ is the
%$\card{\stateSet}\times\card{\stateSet}$ state-transition matrix. The matrix
%$\trMat = [\trProb_{ij}]$ is row stochastic and $\trProb_{ij}$ is the
%probability $\prob(\state_{j}|\state_{i})$ of visiting state $j$ given state
%$i$ in the past.}
\end{itemize}
\end{defn}
\begin{rem} The $\card{\stateSet}\times\card{\stateSet}$ state-transition matrix
  $\trMat: \stateSet\times\stateSet \rightarrow [0, 1]$ can be constructed from
  the state transition function $\delta$ and the emission matrix $\emMat$. The state transition matrix $\trMat$ can be defined as follows.
  \begin{equation}
  \trMat_{jk} \triangleq \sum\limits_{\alphabet \in \alphabetSet:\trFn(\state_j,\alphabet)=\state_k} \emMat(\state_j,\alphabet), \forall \state_j, \state_k\in \stateSet \nonumber
  \end{equation}
  The matrix $\trMat = [\trProb_{ij}]$ is row stochastic and $\trProb_{ij}$ is the
  probability $\prob(\state_{j}|\state_{i})$ of visiting state $j$ given state
  $i$ in the past.
\end{rem}

A PFSA can be seen as a generative machine - probabilistically generating a
symbol string through state transitions.  Ranging from pattern recognition,
machine learning to computational linguistics, PFSA has found many uses. Certain
well known classes of PFSAs are Hidden Markov Models, stochastic regular
grammars and n-grams. A comprehensive review of PFSA can be found
in~\cite{TDC05-1,TDC05-2}. From the perspective of machine learning, the idea is that we infer the regular patterns in the symbol sequence which help explain the temporal dependence in the discrete sequence; the patterns we look for are words of finite length over the alphabet of the discrete process. Once these patterns are discovered, then the discrete data is compressed as a PFSA whose states are the words of finite length; this step leads to some information loss. The parameters of the PFSA are the symbol emission probabilities from each of the finite length words (states of PFSA). Under the assumption of an ergodic Markov chain, we relax requirement for initial states and thus the only \textit{sufficient statistics}~\cite{V04, P13} for the inferred Markov model are the corresponding symbol emission probabilities. For clarification of presentation, we present the following. 
\begin{defn}[\textbf{Sufficient Statistic}]
Let us assume $S_1,S_2,\dots,S_n\sim p(s;\alpha)$. Then any function $\mathcal{T}=\mathcal{T}(S_1,S_2,\dots,S_n)$ is itself a random variable which we will call a \textit{statistic}. Then, $\mathcal{T}$ is sufficient for $\alpha$ if the conditional distribution of $S_1,S_2,\dots,S_n\mid \mathcal{T}$ does not depend on $\alpha$. Thus we have $p(s_1,s_2,\dots,s_n\mid t;\alpha)=p(s_1,s_2,\dots,s_n\mid t).$
\end{defn}
For the case of finite-order, finite-state Markov chains it means that the conditional symbol emission probabilities and the initial state summarizes in entirety the whole of the relevant information supplied by any sample~\cite{L93}. Under stationarity assumptions, the initial state becomes unnecessary and thus, the sufficient statistics is provided by the conditional symbol emission probabilities. The details on estimation of the parameters are provided later in the paper. 

For symbolic analysis of time-series data, a class of PFSAs called the
$\depth$-Markov machine have been proposed~\cite{R04} as a sub-optimal but
computationally efficient approach to encode the dynamics of symbol sequences as
a finite state machine. The main assumption (and reason for sub-optimality) is
that the symbolic process can be approximated as a $D^{th}$ order Markov
process. For most stable and controlled engineering systems that tend to forget
their initial conditions, a finite length memory assumption is reasonable. The
states of this PFSA are words over $\alphabetSet$ of length $\depth$ (or less);
and state transitions are described by a sliding block code of memory $\depth$
and anticipation length of one~\cite{LindMarcus}. The dynamics of this PFSA can
both be described by the $\card{\stateSet}\times\card{\stateSet}$ state
transition matrix $\trMat$ or the $\card{\stateSet}\times 1$ state visit
probability vector $\stProbVec$. Next we present definitions of D-Markov machines which has been recently introduced in literature as finite-memory approximate models for inference and learning.

\begin{defn} (\textbf{$D$-Markov Machine}~\cite{R04, MR14}) \label{def:D-Markov} A $D$-Markov machine is a statistically stationary stochastic process $S= \cdots a_{-1}  a_{0} a_{1} \cdots $ (modeled by a PFSA in which each state is represented by a finite history of $D$ symbols), where the probability of occurrence of a new symbol depends only on the last $D$ symbols, i.e.,
	\begin{equation}\label{eq:D-Markov}
		P[a_n \mid \cdots  a_{n-D} \cdots a_{n-1} ] = P[a_n \mid a_{n-D} \cdots a_{n-1}]
		%P[s_n \mid s_{n-1} \cdots s_{n-D} \cdots] = P[s_n \mid a_{n-1} \cdots a_{n-D}]
	\end{equation}
	where $D$ is called the depth of the Markov machine.

\end{defn} 
A $D-$Markov machine is thus a $D^{th}$-order Markov approximation of the discrete symbolic process.
%\begin{defn} (\textbf{Symbol Block})\label{def:alphabet_symbolBlock} A symbol block, also called a word, is a finite-length string of symbols belonging to the  alphabet $\alphabetSet$, where the length of a word $w \triangleq s_1 s_2 \cdots s_\ell$  with $s_i \in \alphabetSet$ is $|w|=\ell$, and the length of the empty word $\epsilon$ is $|\epsilon|=0$.  
%
%For systems with fading memory it is expected that the predictive influence of a
%symbol progressively diminishes as one goes further away in the future. Formally
%depth is defined as follows:
%\begin{defn}[\textbf{Depth}]\label{def:depth}
%Let $\symbSeq = \symb_{1}\dots\symb_{k}\symb_{k+1}\symb_{k+2}\dots$ be the
%observed symbol sequence  where each $\symb_{j}\in\alphabetSet$ $\forall j \in
%\natno$. Then, the depth of the process generating $\symbSeq$ is defined as the
%length $\depth$ such that:
%\begin{equation}\label{eq:depthTrueDefn}
%\prob(\symb_{k}|\symb_{k - 1},\dots,\symb_{1}) = \prob(\symb_{k}|\symb_{k - 1},\dots,\symb_{k - \depth})
%\end{equation}
%\end{defn}
%The parameters of the PFSA are then extended as follows.
%\begin{itemize}
%\item The set of all words constructed from symbols in $\alphabetSet$, including the empty word $\epsilon$, is denoted as $\alphabetSet^\star$,
%\item The set of all words, whose suffix (respectively, prefix) is the word $w$, is denoted as $\alphabetSet^\star w$ (respectively, $w \alphabetSet^\star$).
%\item The set of all words of (finite) length $\ell$, where $\ell>0$, is denoted as $\alphabetSet^\ell$.
%\end{itemize}

The PFSA model for the time-series data could be inferred as a probabilistic graph whose nodes are words (or symbol blocks) over $\alphabetSet$ of length equal to $\depth$. This probabilistic model induces a Markov model over the states of the PFSA and the parameters of the Markov model can then be estimated from the data using a Maximum-likelihood approach. This completes the process of model inference using STSA. 

Next we define some notions from information and communication theory which will be used in this paper. In particular, we define the notions of Kolmogorov complexity and Minimum Description length and the related ideas.
\begin{defn}(\textbf{Kolmogorov Complexity})\label{def:kolcomplex} The \textit{Kolmogorov Complexity} $K_\mathcal{U}(x)$ of a string $x$ w.r.t. an universal computer $\mathcal{U}$ is defined as 
\begin{equation}
K_\mathcal{U}(x)=\min\limits_{p:\mathcal{U}(p)=x} \ell(p)
\end{equation} 
the minimum length over all programs that print $x$ and halt. Thus, $K_\mathcal{U}(x)$ is the shortest description length of $x$ over all descriptions interpreted by computer $\mathcal{U}$. The term $\mathcal{U}(p)$ denotes the output of the universal computer $\mathcal{U}$, when presented with a program $p$.
\end{defn}
The Minimum Description length (MDL)~\cite{G04} principle is a relatively recent method for inductive inference that provides a generic solution to the model selection problem. MDL is based on the following insight: any regularity in data can be used to compress that data i.e., to describe it using fewer symbols needed to describe the data literally. The more regularities there are, the more the data can be compressed. Thus equating 'learning' with 'finding regularity', we can therefore say that the more we are able to compress the data, the more we have \textit{learned} about the data. 

Next we present Borel-Cantelli lemma, which is a fundamental result in probability theory. This is used in the subsequent sections to establish consistency of the order estimator for Markov chain.
\begin{lem}[\textbf{Borel-Cantelli Lemma}]\label{lemma:BC}
Let $\{B_k\}$ be an arbitrary sequence of events on a probability space $(\Omega,\mathcal{E},P)$ and $p_k=P(B_k)$. Then, the following is true
\begin{equation}
\sum_{k=1}^\infty P(B_k)< \infty \implies P(\lim\sup B_k)=0 \nonumber
\end{equation}
where $\lim\sup$ is defined as follows for any sequence of events $\{E_k\}$: $\lim\sup\limits_{n \rightarrow \infty} \triangleq \bigcap_{n=1}^\infty \bigcup_{k=n}^\infty E_k $.
\end{lem}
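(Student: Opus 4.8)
The plan is to exploit the explicit set-theoretic form of the limit superior together with two standard properties of a probability measure, namely monotonicity and countable subadditivity. The single arithmetic fact that drives the whole argument is that a convergent series of non-negative terms has vanishing tails, and the proof amounts to transferring this tail decay onto the measure of $\limsup B_k$.

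First I would fix the representation $\limsup_k B_k = \bigcap_{n=1}^\infty C_n$, where $C_n \triangleq \bigcup_{k=n}^\infty B_k$ denotes the $n$-th tail union. The sets $\{C_n\}$ form a decreasing (nested) family $C_1 \supseteq C_2 \supseteq \cdots$, and in particular $\limsup_k B_k = \bigcap_n C_n \subseteq C_n$ for every $n$. Next, for each fixed $n$, monotonicity of $P$ applied to the inclusion $\limsup_k B_k \subseteq C_n$ yields $P(\limsup_k B_k) \le P(C_n)$, and countable subadditivity gives $P(C_n) = P\!\left(\bigcup_{k=n}^\infty B_k\right) \le \sum_{k=n}^\infty P(B_k) = \sum_{k=n}^\infty p_k$.

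I would then invoke the hypothesis $\sum_{k=1}^\infty p_k < \infty$: since the full series converges, its tail satisfies $\sum_{k=n}^\infty p_k \to 0$ as $n \to \infty$. Chaining the two inequalities above shows that $0 \le P(\limsup_k B_k) \le \sum_{k=n}^\infty p_k$ holds for every $n$, and letting $n \to \infty$ forces $P(\limsup_k B_k) = 0$, the non-negativity of $P$ ruling out any strictly negative value.

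The argument has no genuine obstacle; the only point that deserves care is the passage to the limit, i.e. recognizing that the bound $P(\limsup_k B_k) \le \sum_{k=n}^\infty p_k$ is valid simultaneously for all $n$ before one takes the infimum over $n$. A cleaner but equivalent route, which I would mention to make this rigorous, is to observe that $P(C_1) \le \sum_k p_k < \infty$, so continuity from above of $P$ applies to the nested family $\{C_n\}$ and yields $P\!\left(\bigcap_n C_n\right) = \lim_n P(C_n) \le \lim_n \sum_{k=n}^\infty p_k = 0$ directly, avoiding any ambiguity about interchanging the inequality with the limit.
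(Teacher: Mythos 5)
Your proposal is correct and follows essentially the same route as the paper's proof: bound $P(\limsup_k B_k)$ by the tail sum $\sum_{k\geq n} P(B_k)$ via monotonicity and countable subadditivity, then let the tail of the convergent series vanish. Your write-up is in fact somewhat more careful than the paper's (which compresses these steps into a single displayed inequality), and the remark about continuity from above is a nice touch, but it is the same argument.
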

\begin{proof}
We have that the following is true.
\begin{equation}
P(\lim\sup A_n)= P(\bigcap_{n=1}^\infty \bigcup_{k=n}^\infty B_k) \leq \lim\limits_{n\rightarrow \infty}\sum_{k>n}p(B_k) \nonumber.
\end{equation}
Since $\sum_{k=1}^{\infty}P(B_k) <\infty$, the tail end of the series must sum to zero. Hence $\lim\limits_{n \rightarrow \infty}\sum_{k>n}P(B_k)=0$. \QEDB
\end{proof}

Next we introduce an information-theoretic distance which is used in the subsequent sections to quantify changes in the Markov models during a physical process.
\begin{defn}[\textbf{Kullback-Leibler Divergence}]\label{def:KLD}
	The Kullback-Leibler (K-L) divergence of a discrete probability distribution $P$ from another distribution $\tilde{P}$ is defined as follows.
	\begin{equation}
		D_{\textrm {KL}}(P\|\tilde{P})=\sum_{x\in X} {p}_X(x)\frac{\log({p}_X(x))}{\log(\tilde{p}_X(x))} \nonumber
	\end{equation}
	It is noted that K-L divergence is not a proper distance as it is not symmetric. However, to treat it as a distance it is generally converted into symmetric divergence as follows, $d(P,\tilde{P})= D_{\textrm {KL}}(P\|\tilde{P})+D_{\textrm {KL}}(\tilde{P}\|P)$. This is defined as the K-L distance between the distributions $P$ and $\tilde{P}$.
\end{defn}

%A reference pattern (e.g. obtained during
%training or the nominal case) can be compared with an observed pattern for
%anomaly detection~\cite{R04}, or these patterns can be used for clustering and
%classification of time-series~\cite{MRJ12}. The alphabet size or the level of
%coarse-graining of the continuous domain is driven by the resolution level
%required to capture the dynamics of the system -- domain knowledge or
%data-driven partitioning techniques~\cite{LHTD2002, GLSLH2012} can be used for
%this purpose. Estimating the depth of historical influences, on the other hand,
%requires estimation of the decay-rate of the memory of a dynamical system.
\section{Problem Formulation}\label{sec:probform}
As explained in section~\ref{sec:background}, there are two critical steps for Markov modeling of data using symbolic dynamics. Once the model structure is fixed, we need to estimate the parameters of the model from the data and thus, the following steps are followed during model inference.
\begin{itemize}
\item Symbolization or Partitioning
\item Order estimation of the discrete stochastic process
\item Parameter estimation of the Markov model inferred with the composite method of discretization and order estimation.
\end{itemize}

The overall idea of modeling the signal as a Markov chain is also represented in Figure~\ref{fig:split_fig} where the various factors to be considered during Markov modeling are emphasized. For example, how the original time-series data is related to the discrete symbol sequence and then, how can we define a metric to relate the representation in the original space and the discrete space. Another important question is to analyze the relation of the original data with the discrete Markov model inferred using the parameters estimated during the modeling process.

\begin{figure}[h] %Figure 03
    \centering
        \includegraphics[width=1.0\columnwidth]{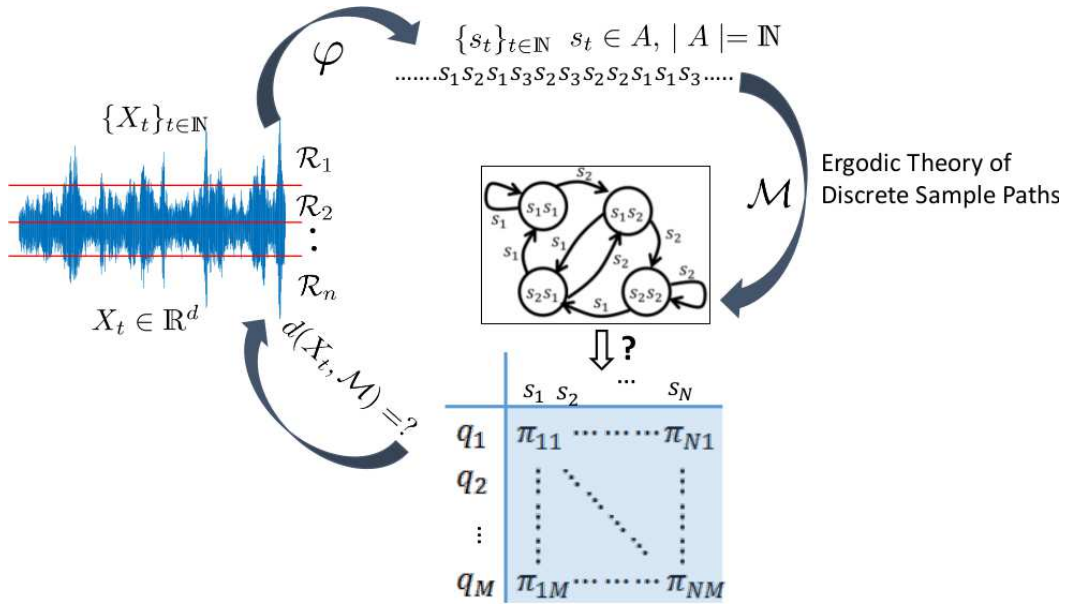}
	\caption{Markov modeling of time series data and the associated decisions of partitioning and parameter estimation%. Final set of states $Q = \{00, 010, 110, 01, 11\}$
	}\label{fig:split_fig}
\end{figure}

In general, for most of the machine learning applications, no description of the underlying system model is available. This in turn makes evaluation of different models for signal representation very difficult as the topological structure of the underlying dynamical system is unknown. In the presence of some end objectives like class separability, anomaly detection, etc. a Bayesian inference rule could be used for selection of the optimal model~\cite{F04}. However, the problem is ill-posed in the absence of a well defined end objective when the aim is to just infer the precise model for signal representation which can be later used for various different operations like clustering or classification. The problem of partitioning the data is particularly ill-posed in the absence of a model, as it is difficult to specify a metric to define a consistent criterion as the process terminates with a Markov model of the data when the true statistical nature of data is unknown. In general, the order of the discrete data depends on the parameters of the partitioning process i.e., the cardinality of the discrete set and the location of the partitions. Thus, although, the precision of the Markov model depends on the partitioning process but in the absence of any model for the data, it is hard to evaluate performance or find a criterion for consistency.

In this paper, we try to present some results available in open literature on the mathematical formalism of partitions and order estimation of discrete stationary Markov processes. In particular, we try to answer the following questions.
\begin{itemize}
\item \textbf{Problem 1.} Given a compact phase space $\Omega \subseteq \real^n$ of a dynamical system $\dot{\mathbf{x}}=f(\mathbf{x})$, under what conditions a topological Markov shift representation for the dynamical system yield a partitioning for the system? Furthermore, how are such partitions characterized?
\item \textbf{Problem 2.} Synthesis of an estimator for estimating the order of a discrete stochastic process that estimates the right order of the Markov process with probability $1$.
\item \textbf{Problem 3.} For practical applications, it is more suitable if the parameters of the final Markov model could be estimated in a recursive manner. How to synthesize the recursive update rule for the Markov model? 
\end{itemize}

%Some other problems that are of interest emphasize on the computational aspects and robustness of the underlying model inference. In particular, the models constructed from a data-driven perspective should be robust to process and measurement noise. In model-based approaches, it is common to assume some parametric uncertainties in the model parameters which can then be used to arrive at a robust decision. Similarly, it is important to associate uncertainties in the model learned from data to account for change in hidden variables for the process. In data-driven systems, it is a common approach to learn the system model from an ensemble of training data; this necessitates a methodological approach to represent the ensemble as models for different instances could be widely different. The problem that we are interested in could then be stated as follows: Let us assume that a random observation $Y \in \real$ under a particular event of interest admits a model $M$ as the nominal model. However, the actual model $\tilde{M}$ of $Y$ under the event is not known exactly and belongs to the neighborhood
%\begin{equation}
%\mathcal{M}=\{\tilde{M}:d(M,\tilde{M})\leq \epsilon\}
%\end{equation}
%where, $d$ is an appropriate metric. Then, the problem of interest is to delineate a framework to represent the set $\mathcal{M}$ during training and how such information could be used to arrive at robust decisions during testing.

In the following sections, we try to answer the above questions and then, demonstrate the proposed methods of data-sets from some engineering systems. We will also present how the two steps of discretization and order estimation can be tied together for inferring a Markov model from the time series where we try to explore the interplay between discretization and order estimation.
\section{Partitioning of Phase Space}\label{sec:partitions}
In this section, we will present some concepts and results on some special kind of well-behaved partitioning. Most the results in this section are based on earlier work presented in~\cite{A98}~\cite{LindMarcus}. Specifically, we will focus on a special class of partitions called \textit{Markov partitions}. Loosely speaking, by using a Markov partition, a dynamical system can be made to resemble a discrete-time Markov process, with the long-term dynamical characteristics of the system represented by a Markov shift. Before providing a formal definition for Markov partitions we need to visit some properties of a partition from dynamical systems literature. For brevity proofs of some simple properties are skipped. Interested readers are referred to~\cite{A98} for more details.  It is noted that in the subsequent material interior of an arbitrary set $\X$ is denoted by $\X^o$.
\begin{defn}\label{def:toprefine}
Given two topological partitions $\mathcal{R}=\{R_0,R_1,\dots,R_{n-1}\}$ and $\mathcal{P}=\{P_0,P_1,\dots,P_{n-1}\}$, we define their topological refinement $\mathcal{R} \lor \mathcal{P} $ as 
\begin{equation}
\mathcal{R}\lor \mathcal{P}= \{R_i\cap P_j: R_i \in \mathcal{R}, P_j \in \mathcal{P} \} \nonumber
\end{equation}
\end{defn}
Based on the definition~\ref{def:toprefine}, the following are true.
\begin{itemize}
\item The common topological refinement of two topological partitions is a topological partition.
\item For a dynamical system $(M,\phi)$ with topological partition $\mathcal{R}$ of $M$, the set defined by $\phi^n \mathcal{R}= \{ \phi^n R_0,\phi^n R_1,\dots, \\ \phi^n R_{n-1} \}$ is also a topological partition. 
\end{itemize}
From the above two results we have that for $m \leq n$, $\bigvee_{m}^n \phi^k \mathcal{R}=\phi^m \mathcal{R} \lor \phi^{m-1}\mathcal{R}\lor \dots \lor \phi^n \mathcal{R}$ is again a topological partition. In the subsequent material, we use the following notations. 
\begin{equation}
\mathcal{R}^n \equiv \bigvee\limits_{k=0}^{n-1} \phi^{-k} \mathcal{R} \nonumber
\end{equation}
The diameter of a partition is defined as $d(\mathcal{R})=\max\limits_{R_i \in \mathcal{R}}d(R_i)$, where $d(R_i)\equiv \sup\limits_{x,y \in R_i}d(x,y)$. Next, we define cylinder set partition for symbol sequences which will be useful to characterize sufficient conditions for existence of Markov partitions for a dynamical system. Consider a topological Markov shift $(X_V,\sigma)$ where the vertex set is labeled by the alphabet $\alphabetSet$. Then, we form the partition $\mathcal{C}=\{C_a : a\in \alphabetSet \}$ of the elementary cylinder sets determined by fixing the $0th-$coordinate i.e., $\mathcal{C}_a= \{x \in X_V: x_0=a \}$. Then, this has the following properties. 
\begin{enumerate}
\item $\bigcap_{m=0}^\infty\bigcap_{-m}^m \phi^{-k}C_{a_k}=\{x\}$.
\item If $x \in C_a \cap \sigma^{-1}C_b$, then $x \in C_a$ and $\sigma x \in C_b$, i.e., $x_0=a,x_1=b$ (or it means an edge from $a$ to $b$ in terms of a graph.) 
\item If $C_a \cap \sigma^{-1} C_b \neq \emptyset$ and $C_b \cap \sigma^{-1} C_c \neq \emptyset$, then $C_a \cap \sigma^{-1} C_b \cap \sigma^{-2} C_c \neq \emptyset$. This can be extended to length $n$ for arbitrary $n$. We shall call such a countable set of conditions for $n=2,3, \dots$ the Markov property: this turns out to be one of the key properties in getting the desired symbolic representation from a partition. 
\end{enumerate}
\begin{defn}[\textbf{Generator}]
We call a topological partition a generator for a dynamical system $(M,\phi)$ if $\lim\limits_{n \rightarrow \infty}d(\bigvee_{-n}^n\phi^k \mathcal{R})=0$.
\end{defn}
For an expansive dynamical system $(M,\phi)$ let $\mathcal{R}$ be a topological partition such that $d(\mathcal{R})<c$ where $c$ is the expansive constant. Then, $\mathcal{R}$ is a generator.\\

For a topological partition $\mathcal{R}$, let us call $X_{\mathcal{R},\phi}$ the symbolic dynamical representation of $(M,\phi)$. Then, for each $x \in X_{\mathcal{R},\phi}$ we define $E_n(x) = \bigcap\limits_{k=-n}^n \phi^{-k}R_{x_k}$. Markov partitions are defined more formally next.
\begin{defn}[\textbf{Markov Partitions}]\label{def:MP}
Let $(M,\phi)$ define be an invertible dynamical system. A topological partition $\mathcal{R}=\{R_1,R_2,\dots,R_{n-1}\}$ of $M$ gives a symbolic representation of $(M,\phi)$ if for every $x\in X_{\mathcal{R},\phi}$ the intersection of $\cap_{n=0}^\infty\bar{E}_n(x)$ contains exactly one point. We call $\mathcal{R}$ to be a Markov partition for $(M,\phi)$ if $\mathcal{R}$ gives a symbolic representation of $(M,\phi)$ and furthermore, $X_{\mathcal{R},\phi}$ is a shift of finite type.
\end{defn}
There are natural extensions of Markov partitions for the cases where the dynamical system is not invertible where we define one-sided symbolic representation of $(M,\phi)$ which is denoted as $X_{\mathcal{R},\phi}^+$. Next we present a theorem which specifies sufficient condition for existence of Markov partitions. In the following we prove that the partition $\mathcal{C}=\{C_i: i=0,1,\dots, n-1$ consisting of elementary cylinder sets $C_i= \{x \in X_V: x_0=i\} $ for a dynamical system $(X_V,\sigma)$ is a topological Markov generator.
%\begin{thm}\label{Markovpartition-factormap}
%Suppose the dynamical system $(M,\phi)$ is expansive and has a Markov generator $\mathcal{R}=\{R_1,R_2,\dots R_{n-1}\}$. Then, the map $\pi$ is an essentially one-to-one finite factor map of the shift of finite type $\Sigma_G$ onto $M$. Furthermore, if $(M,\phi)$ is irreducible, then so is $(\Sigma_G,\psi)$.
%\end{thm}
%\begin{proof}
%We need to show that the following properties holds true for $\pi$ in order to establish that it is a factor map of $\Sigma_G$ onto $M$. 
%
%In order to establish $(iv)$, a bound on the number of pre-images under $\pi-$ we introduce the following concept.
%\begin{defn}
%A map $\pi$ from $\Sigma_G$ to $M$ is called a diamond if there are two sequences $s,t \in \Sigma_G$ for which $\pi(s)=\pi(t)$ and for which there exist indices $k <l<m$ such that $s_k=t_k$, $s_l\neq t_l$ and $s_m=t_m$. 
%\end{defn} \QEDB
%\end{proof}
\begin{thm}~\cite{A98}\label{thm:Markovpartition}
Let $(M,\phi)$ be a dynamical system, $(X_G,\pi)$ an irreducible shift of finite type based on $n$ symbols, and suppose there exists an essentially one-to-one factor map $\pi$ from $X_G$ to $M$. Then the partition $\mathcal{R}=\{R_i=\pi(C_i)^o:i=0,1,\dots,n-1\}$ is a topological Markov partition.
\end{thm}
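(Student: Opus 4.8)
The plan is to transport the elementary cylinder partition $\mathcal{C} = \{C_i : i = 0, 1, \dots, n-1\}$ of the shift of finite type $X_G$ down to $M$ through the factor map $\pi$, and to verify that each defining property of a Markov partition (Definition~\ref{def:MP}) is inherited. Throughout I would use the semiconjugacy $\phi \circ \pi = \pi \circ \sigma$ that $\pi$ satisfies as a factor map, the fact that the cylinder sets $C_i$ are clopen and obey the Markov property listed before the Generator definition, and---crucially---the essential injectivity of $\pi$, namely that the number of preimages is bounded and that every bilaterally transitive point of $M$ has a unique preimage.

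First I would show that $\mathcal{R} = \{R_i = \pi(C_i)^o\}$ is a topological partition. Openness is immediate, since each $R_i$ is defined as an interior. The heart of this step is to control the set $B \subseteq M$ of points possessing more than one $\pi$-preimage. Because $X_G$ is irreducible, its bilaterally transitive points are dense, and by hypothesis their images lie outside $B$; one then argues that $B$ is nowhere dense in $M$. Off $B$ the factor map is injective, so the compact sets $\pi(C_i)$ (images of the clopen $C_i$ under the continuous $\pi$ on the compact $X_G$) can overlap only inside $B$, and passing to interiors removes these overlaps and yields disjointness. For the covering axiom I would establish $\overline{R_i} = \pi(C_i)$, using that $R_i$ is dense in $\pi(C_i)$ (again by density of the good points), so that $\bigcup_i \overline{R_i} = \bigcup_i \pi(C_i) = \pi(X_G) = M$ by surjectivity.

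Next I would identify the symbolic system $X_{\mathcal{R},\phi}$ generated by $\mathcal{R}$ with $X_G$ itself. The semiconjugacy guarantees that for $x \in X_G$ the $\mathcal{R}$-itinerary of $\pi(x)$ under $\phi$ agrees with the $\mathcal{C}$-itinerary of $x$ under $\sigma$, at least on the dense set of bilaterally transitive points, so the admissible blocks of $X_{\mathcal{R},\phi}$ coincide with those of $X_G$. Since the cylinder partition $\mathcal{C}$ satisfies the Markov property (property~3 in the list preceding the Generator definition) and $X_G$ is by hypothesis a shift of finite type, it follows that $X_{\mathcal{R},\phi}$ is a shift of finite type, which is the second requirement of Definition~\ref{def:MP}.

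Finally I would verify that $\mathcal{R}$ gives a symbolic representation, i.e. that for each $x \in X_{\mathcal{R},\phi} = X_G$ the nested intersection $\bigcap_{n=0}^\infty \overline{E_n(x)}$ is a single point. By the first listed property of the cylinder partition the corresponding nested cylinder intersection in $X_G$ shrinks to the single point $x$; applying the continuous map $\pi$ shows the target intersection is nonempty and contained in the $\pi$-image of that point, and the bounded-to-one property together with uniqueness on bilaterally transitive points then forces it to be exactly one point. The main obstacle throughout is the same: establishing that the non-injectivity set $B$ of $\pi$ is topologically negligible, since disjointness in the partition step, the block correspondence, and the single-point property all rest on pushing the clean behavior of $\pi$ on the dense set of bilaterally transitive points out to the boundary. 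Once that thinness is in hand, the remaining verifications reduce to density and compactness arguments.
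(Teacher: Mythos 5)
Your proposal is correct and follows essentially the same route as the paper's proof: both transport the cylinder partition $\mathcal{C}$ of $X_G$ through the essentially one-to-one factor map $\pi$, and both lean on the density of bilaterally transitive points (where $\pi$ is injective) to get disjointness of the $R_i$, the covering identity $\overline{R}_i=\pi(C_i)$, the one-point intersection (generator) property, and the Markov/shift-of-finite-type property. The only difference is bookkeeping: the paper packages the itinerary-transfer arguments you sketch into one explicit lemma, $\pi\bigl(\bigcap_m^n \sigma^{-k}C_{a_k}\bigr)=\overline{\bigcap_m^n \phi^{-k}R_{a_k}}$, and then reads off the generator and Markov properties from it, whereas you distribute the same argument across your steps 2 and 3.
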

\begin{proof}
We need to prove the following in order to prove that $\mathcal{R}$ is the Markov partition. 
\begin{enumerate}
\item Elements of $\mathcal{R}$ are disjoint.
\item The closure of elements of $\mathcal{R}$ cover $M$.
\item $R$ is a generator.
\item $\mathcal{R}$ satisfies the Markov property.
\end{enumerate}
In the following we prove the four items listed above in order to prove that $\mathcal{R}$ is a Markov partitions.
\begin{enumerate}
\item Elements of $\mathcal{R}$ are disjoint, i.e., $R_i\cap R_j =\emptyset$, $i \neq j$. \\
The idea of the proof is to use bilaterally transitive points to overcome a difficulty that, in general, maps do not enjoy the property that the image of an intersection is equal to the intersection of images, but one-to-one maps do. Suppose $R_i\cap R_j \neq 0$ for $i \neq j$. Then, by proposition .. $BLT (R_i\cap R_j)\neq \emptyset$. By proposition $4.8$ and $4.9$, $\pi^{-1}$ maps $BLT(R_i)$ and $BLT(R_j)$ homeomorphically onto $BLT(C_i)$ and $BLT(C_j)$ respectively. Therefor $\pi^{-1}$ maps $BLT(R_i\cap R_j)= BLT(R_i)\cap BLT(R_j)$ homeomorphically onto $BLT(C_i\cap C_j)=BLT(C_i)\cap BLT(C_j)$, which implies that $\emptyset \neq BLT(C_i \cap C_j)\subset C_i\cap C_j$, which is a contradiction. \\
\item $M=\cup_{i=0}^{n-1} \bar{R}_i.$\\
$M =\pi(\Sigma_G)$ = $\pi(\cup_{i=0}^{n-1}C_i)$ = $\cup_{i=0}^{n-1}\pi(C_i)$ = $\cup_{i=0}^{n-1}\bar{R}_i$, the last equality follows  from proposition ... \\

To prove the next two items, we need another lemma which is presented next.
\begin{lem}
Under the hypothesis of Theorem~\ref{thm:Markovpartition}, $\pi (\cap_m^n\sigma^{-k}C_{a_k})=\overline{\cap_m^n \phi^{-k}R_{a_k}}$.
\end{lem}
\begin{proof}
Once again we use the bilaterally transitive points to deal with images of intersections. We have the following equalities.
\begin{equation}
\pi(\bigcap_m^n \sigma^{-k}C_{a_k}) =\pi \big( \overline{BLT(\bigcap_m^n \sigma^{-k}C_{a_k})}\big) 
= \overline{\pi (\bigcap_m^n BLT(\sigma^{-k}C_{a_k})\big)} \nonumber
\end{equation}
which by injectivity of $\pi$ and shift-invariance of bilateral transitive points.
\begin{equation}
=\overline{\bigcap_m^n \pi(BLT(\sigma^{-k}C_{a_k}))} = \overline{\bigcap_m^n \pi \sigma^{-k}BLT (C_{a_k})} \nonumber
\end{equation}
which by commutativity of $\pi$ can be reduced to the following.
\begin{equation}
=\overline{\bigcap_m^n \phi^{-k} \pi (BLT(C_{a_k}))}=\overline{\bigcap_m^n\phi^{-k}(BLT(\pi(C_{a_k})))} \nonumber
\end{equation}
This can be further reduced to the following.
\begin{equation}
= \overline{\bigcap_m^n\phi^{-k}(BLT(R_{a_k}))}=\overline{BLT(\bigcap_m^n\phi^{-k}(R_{a_k}))} =\overline{\bigcap_m^n \phi^{-k}(R_{a_k})} \nonumber.
\end{equation} \QEDB
\end{proof}
\item $\mathcal{R}$ is a generator. \\
Because $\mathcal{C}$ is a generator, $d(\cap_{-n}^n\sigma^{-k}C_{a_k})\rightarrow 0$. By Lemma, $\pi(\cap_{-n}^n\sigma^{-k}C_{a_k})=\overline{\cap_{-n}^n\phi^{-k}R_{a_k}} $. So, by continuity of $\pi$, we get.
\begin{equation}
d\big(\overline{\cap_{-n}^n\phi^{-k}R_{a_k}}\big) =d \big(\cap_{-n}^n\phi^{-k}R_{a_k}\big)\rightarrow 0
\end{equation}
This proves that $\mathcal{R}$ is a generator.\\
\item $\mathcal{R}$ satisfies the Markov property. \\
Suppose $R_{a_i}\cap \phi^{-1}R_{a_i+1}\neq \emptyset$, $1\leq k \leq n-1$. By Lemma, we have $\pi[C_{a_i}\cap \sigma^{-1}C_{a_i+1}]= \overline{R_{a_i}\cap \phi^{-1}R_{a_i+1}}\neq \emptyset$, $1\leq k\leq n-1$. Thus $C_{a_i}\cap \sigma^{-1}C_{a_i+1}\neq \emptyset$, $1\leq k \leq n-1$. Since $\mathcal{C}$ satisfies the Markov property, $\cap_{k=1}^n\phi^{-k}C_{a_k}\neq \emptyset$  for all $n>1$. Therefore, $\cap_{k=1}^n\phi^{-k}R_{a_k}\neq \emptyset$ for all $n>1$.
\end{enumerate}
This completes the proof of this theorem. \QEDB
\end{proof}

The above theorem establishes the a sufficient condition for existence of Markov partitions. However, except for some well-behaved dynamical systems, there are no results on existence of Markov partitioning for a dynamical system. Thus, for most of the machine learning applications we are interested in, it would be very difficult to estimate the partitioning which can be proved to induce Markov dynamics. However, given that the dynamics of a discrete stochastic process is Markovian, we need to be able to estimate the order of the Markovian dynamics. This is discussed in the next section.
\section{Order Estimation of Markov Chains}\label{sec:OrderEstimation}
In this section, we will present an order estimation technique for Markov chains which converges almost surely to $k$ given that the underlying stationary and ergodic stochastic process is a $k^{th}$ order Markov chain and infinity otherwise. The work on order estimation presented in this section is based on earlier work presented in~\cite{MW07, MW071, MW08, MW13}. 

Let $\{A\}_{n=0}^\infty$ be a stationary and ergodic time-series taking values from a discrete (finite or infinite) alphabet $\alphabetSet$. In order to estimate the order we need to define some explicit statistics. For $k\geq 0$ let $\mathcal{S}_k$ denote the support of the distribution of $S_{-k}^0$ as
\begin{eqnarray}
\mathcal{S}_k= \{ a_{-k}^0 \in \alphabetSet^{k+1}: p(a_{-k}^0)>0 \} \nonumber
\end{eqnarray}
Next we define
\begin{eqnarray}
\Delta_k=\sup\limits_{1\leq i} \sup\limits_{(b_{-k-i+1}^0,a)\in \mathcal{S}_{k+i}} |p(a|b_{-k+1}^0)-p(a|b_{-k-i+1}^0 )| \nonumber
\end{eqnarray}
We divide the data segment $S_0^n$ into two parts: $S_0^{\lceil \frac{n}{2}\rceil -1}$ and $S_{\lceil\frac{n}{2}\rceil}^n$. Let $\mathcal{S}_{n,k}^{(1)}$ denote the set of strings with length $k+1$ which appear at all in $S_0^{\lceil \frac{n}{2}\rceil -1}$, i.e., 
\begin{equation}
\mathcal{S}_{n,k}^{(1)}=\{a_{-k}^0 \in \alphabetSet^{k+1}: \exists k\leq t\leq \lceil \frac{n}{2}\rceil -1 : S_{t-k}^t=a_{-k}^0\} \nonumber 
\end{equation}
For a fixed $0 <\gamma <1$, let $\mathcal{S}_{n,k}^{(2)}$ denote the set of strings with length $k+1$ which appear more than $n^{1-\gamma}$ times in $S_{\lceil\frac{n}{2}\rceil}^n$. That is,
\begin{equation}
\mathcal{S}_{n,k}^{(2)}=\{a_{-k}^0 \in \alphabetSet^{k+1}: N \{\lceil\frac{n}{2}\rceil+k\leq t\leq \lceil \frac{n}{2}\rceil -1 : S_{t-k}^t=a_{-k}^0\}> n^{1-\gamma}\} \nonumber 
\end{equation}
where $N\{\bullet\}$ denotes the count of $\bullet$. Let 
\begin{equation}
\mathcal{S}_k^n = \mathcal{S}_{n,k}^{(1)} \cap \mathcal{S}_{n,k}^{(2)} \nonumber
\end{equation}

For the sake of notational convenience, let $C(a\mid b_{-k+1}^0:[n_1,n_2])$ denote the empirical conditional probability of $A_1=a$ given $A_{-k+1}^0=b^0_{-k+1}$ from the samples $(A_{n_1},\dots,A_{n_2})$, that is,
\begin{equation}
C(a\mid b_{-k+1}^0:[n_1,n_2])=\frac{N\{n_1+k\leq k\leq n_2:A_{t-k}^t=(b_{-k+1}^0,a)\}}{\{n_1+k-1\leq t\leq n_2-1:A_{t-k+1}^t=b_{-k+1}^0\}}
\end{equation}
where $0/0$ is defined as $0$.

We define the empirical version of $\Delta_k$ as follows:
\begin{equation}
\hat{\Delta}_k^n=\max\limits_{1\leq i\leq n}\max\limits_{(b_{-k-i+1}^0,a)\in \mathcal{S}_{k+i}^n}\mid C(a\mid b_{-k+1}^0:[\lceil\frac{n}{2}\rceil,n])-C(a\mid b_{-k-i+1}^0:[\lceil\frac{n}{2}\rceil,n])\mid
\end{equation}
Then we observe that by ergodicity, for any fixed $k$, the following is true.
\begin{equation}\label{eqn:orderapprox}
\lim\inf\limits_{n\rightarrow\infty}\hat{\Delta}_k^n\geq \Delta_k
\end{equation}

\begin{defn}[\textbf{Markov Chain Order Estimator}]\label{def:MCorderestimate}
We define an estimate $D_n$ for the order of samples from $S_0^n$ as follows. Let the $0<\beta<\frac{1-\gamma}{2}$ be arbitrary. Set $D_0=0$, and for $n\geq 1$ let $D_n$ be the smallest $0\leq k_n<n$ such that $\hat{\Delta}^n_{k_n}\leq n^{-\beta}$.
\end{defn}
The consistency of the Markov chain estimator defined in Definition~\ref{def:MCorderestimate} is proved in a theorem which is presented next.
\begin{thm}~\cite{MW07}
If the stationary and ergodic discrete time series $\{S_n\}$ taking values from a discrete alphabet happens to be Markov with any finite order then $D_n$ equals the order eventually almost surely, and if it is not Markov with any finite order then, 
$D_n \rightarrow \infty$ almost surely.
\end{thm}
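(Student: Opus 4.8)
The plan is to recast the theorem entirely in terms of the quantities $\Delta_k$ and their empirical counterparts $\hat\Delta_k^n$. First I would record the deterministic characterization underlying the whole estimator: a stationary process is Markov of order $k$ precisely when $\Delta_k=0$, since $\Delta_k$ measures exactly the discrepancy between conditioning on the last $k$ symbols and conditioning on any longer past. Consequently, if the true order is $m$ then $\Delta_k=0$ for every $k\ge m$ while $\Delta_k>0$ for every $k<m$, and the smallest $k$ with $\Delta_k=0$ is $m$; if the process is not Markov of any finite order, then $\Delta_k>0$ for all $k$. Because $D_n$ is, by Definition~\ref{def:MCorderestimate}, the smallest $k$ with $\hat\Delta_k^n\le n^{-\beta}$, consistency reduces to two one-sided statements: a \emph{no-undershoot} bound (the estimator does not settle below the true order) and, in the finite-order case, a \emph{no-overshoot} bound (the estimator does not exceed $m$).

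For the no-undershoot direction I would use the inequality $\liminf_{n}\hat\Delta_k^n\ge\Delta_k$ already recorded in~\eqref{eqn:orderapprox}. Whenever $\Delta_k>0$, since $n^{-\beta}\to 0$, it follows that $\hat\Delta_k^n>n^{-\beta}$ eventually, so the estimator cannot select such a $k$ for large $n$. In the finite-order-$m$ case this excludes every $k<m$, giving $D_n\ge m$ eventually almost surely; in the non-Markov case it excludes every $k\le K$ for each fixed $K$, and letting $K\to\infty$ yields $D_n\to\infty$ almost surely. This entire direction is essentially immediate once~\eqref{eqn:orderapprox} is granted.

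The substantive part is the no-overshoot bound at $k=m$, where $\Delta_m=0$ and I must show $\hat\Delta_m^n\le n^{-\beta}$ eventually almost surely. Since $\Delta_m=0$ forces the relevant true conditional probabilities to coincide, $\hat\Delta_m^n$ is a pure estimation error: a maximum of differences of empirical conditional probabilities, each computed only from strings in $\mathcal{S}_{n,m}^{(2)}$ and hence seen more than $n^{1-\gamma}$ times in the second half of the data. The plan is to fix a context together with a candidate refinement, express the deviation of each $C(a\mid\cdot:[\lceil\frac{n}{2}\rceil,n])$ from the common true value as a normalized sum of a bounded martingale-difference sequence adapted to the chain's natural filtration, and apply an Azuma--Hoeffding concentration bound with tolerance $\tfrac12 n^{-\beta}$. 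With at least $N\approx n^{1-\gamma}$ samples, each such deviation has probability at most $2\exp(-c\,n^{1-\gamma-2\beta})$, and the exponent is positive exactly because $\beta<\tfrac{1-\gamma}{2}$. A union bound over the contexts under consideration---whose number is only polynomial in $n$ because the sample-split set $\mathcal{S}_{n,m}^{(1)}$ restricts attention to strings actually appearing in $S_0^{\lceil n/2\rceil-1}$---keeps the total probability summable in $n$, so Borel--Cantelli (Lemma~\ref{lemma:BC}) gives $\hat\Delta_m^n\le n^{-\beta}$ eventually almost surely. Combining this with the no-undershoot bound yields $D_n=m$ eventually almost surely.

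The main obstacle I anticipate is making the concentration step fully rigorous under dependence: successive occurrences of a fixed context are not independent, so the correct martingale (or a regeneration/mixing substitute) must be exhibited, while the union bound is simultaneously controlled as both the refinement index $i$ and the string lengths grow with $n$. The sample-splitting into $S_0^{\lceil n/2\rceil-1}$ and $S_{\lceil n/2\rceil}^n$, together with the frequency threshold $n^{1-\gamma}$, is precisely the device that decouples ``which contexts to test'' from ``estimating each context's transition law,'' and arranging for these two pieces to cooperate so that the Borel--Cantelli series converges is where the real effort lies.
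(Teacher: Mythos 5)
Your proposal is correct and follows essentially the same route as the paper's proof: the characterization of the order via $\Delta_k$, the no-undershoot direction (and the non-Markov case) read off from \eqref{eqn:orderapprox}, and the no-overshoot direction via concentration of the empirical conditional probabilities at successive occurrence times, a union bound over the polynomially many contexts in $\mathcal{S}_{n,k}^{(1)}$ and the refinement index, and Borel--Cantelli (Lemma~\ref{lemma:BC}) under $2\beta+\gamma<1$. The one point you flag as the main obstacle---dependence between successive occurrences of a fixed context---is precisely where the paper makes the decisive simplifying observation: by the Markov property, the symbols emitted at successive occurrences of a fixed context $b_{-k+1}^0$ are i.i.d.\ draws from $p(\cdot\mid b_{-k+1}^0)$, so plain Hoeffding for bounded independent variables applies directly and no martingale/Azuma or regeneration device is needed.
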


\begin{proof}
If the process is Markov, it is immediate that for all $k$ greater than or equal to the order $\Delta_k=0$. For $k$ less than the order $\Delta_k>0$. If the process is not a Markov chain with any finite order then $\Delta_k>0$ for all $k$. This by~\eqref{eqn:orderapprox} if the process is not Markov then $D_n\rightarrow \infty$ and if it is Markov then $D_n$ is greater or equal to the order eventually almost surely. We have to show that $D_n$ is less or equal the order eventually almost surely provided that the process is a Markov chain.

Assume that the process is a Markov chain with order $k$. Let $n\geq k$. We will estimate the probability of the undesirable event as follows.
\begin{align}\label{eqn:eq-1}
&\prob(\hat{\Delta}_k^n>n^{-\beta}|S_0^{\lceil\frac{n}{2}\rceil})\leq& \nonumber \\ 
&\sum_{i=1}^n\prob(\max\limits_{(b_{-k-i+1,a}\in \mathcal{S}_{k+i}^n)}\mid C(a\mid b_{-k+1}^0:[\lceil\frac{n}{2}\rceil,n])-C(a\mid b_{-k-i+1}^0:[\lceil\frac{n}{2},n])\mid >n^{-\beta}\mid S_0^{\lceil\frac{n}{2}\rceil})&
\end{align}
We can estimate each probability in the sum as the sum of the following two terms:
\begin{align}
&\sum_{i=1}^n\prob(\max\limits_{(b_{-k-i+1,a}\in \mathcal{S}_{k+i}^n)}\mid C(a\mid b_{-k+1}^0:[\lceil\frac{n}{2}\rceil,n])-C(a\mid b_{-k-i+1}^0:[\lceil\frac{n}{2}\rceil,n])\mid >n^{-\beta}\mid S_0^{\lceil\frac{n}{2}\rceil})  & \nonumber\\ 
&\leq \prob(\max\limits_{(b_{-k-i+1}^0,a)\in \mathcal{S}_{k+i}^n}\mid C(a\mid b_{-k+1}^0:[\lceil \frac{n}{2}\rceil,n])-p(a\mid b_{-k+1}^0))\mid >0.5n^{-\beta}\mid S_0^{\lceil\frac{n}{2}\rceil})& \nonumber\\ 
&+\prob(\max\limits_{(b_{-k-i+1}^0,a)\in \mathcal{S}_{k+i}^n} \mid p(a\mid b_{-k+1}^0)-C(a\mid b_{-k-i+1}^0:[\lceil\frac{n}{2}\rceil,n])\mid > 0.5n^{-\beta}\mid S_0^{\lceil\frac{n}{2}\rceil}) & \nonumber
\end{align}
We overestimate these probabilities. For any $m\geq 0$ and $a_{-m}^0$ define $\sigma_i^m(a_{-m}^0)$ as the time of the $i-$th occurrence of the string $a_{-m}^0$ in the data segment $S_{\lceil\frac{n}{2}\rceil}^n$, that is, let $\sigma_0^m(a_{-m}^0)=\lceil\frac{n}{2}\rceil+m-1$ and $i\geq 1$ define,
\begin{equation}
\sigma_i^m(a_{-m}^0)=\min \{t >\sigma_{i-1}^m(a_{-m}^0):S_{t-m}^t:a_{-m}^0\} \nonumber
\end{equation}
Then,
\begin{align}
& \prob(\max\limits_{(b_{-k-i+1}^0,a)\in \mathcal{S}_{k+i}^n}\mid C(a\mid b_{-k+1}^0: [\lceil\frac{n}{2}\rceil,n])-C(a \mid b_{-k-i+1}^0:[\lceil\frac{n}{2},n])\mid >n^{-\beta}\mid S_0^{\lceil\frac{n}{2}\rceil})& \nonumber\\
&\leq \prob(\max\limits_{(b_{-k+1}^0,a)\in \mathcal{S}_{n,k}^{(1)}}\sup\limits_{j>n^{1-\gamma}}\mid \frac{1}{j}\sum_{r=1}^j1_{\{S_{\sigma_r^{k-1}(b_{-k+1}^0)}=a\}}-p(a\mid b_{-k+1}^0)\mid >0.5n^{-\beta}\mid S_0^{\lceil\frac{n}{2}\rceil}) &\nonumber\\
&+\prob(\max\limits_{(b_{-k-i+1}^0,a)\in \mathcal{S}_{n,k+i}^{(1)}}\sup\limits_{j>n^{1-\gamma}}\mid \frac{1}{j}\sum_{r=1}^j1_{\{S_{\sigma_r^{k+i-1}(b_{-k-i+1}^0)}=a\}}-p(a\mid b_{-k+1}^0)\mid >0.5n^{-\beta}\mid S_0^{\lceil\frac{n}{2}\rceil})& \nonumber
\end{align}
Since both $S_{n,k}^1$ and $S_{n,k+i}^1$ depend solely on $S_0^{\lceil\frac{n}{2}\rceil}$, we get
\begin{align}
&\prob\big(\max\limits_{(b_{-k-i+1}^0,a)\in S_{k+i}^n} \mid C(a\mid b_{-k+1}^0:[\lceil\frac{n}{2}\rceil,n])-C(a\mid b_{-k-i+1}^0:[\lceil\frac{n}{2}\rceil,n])\mid >n^{-\beta}\mid S_0^{\lceil\frac{n}{2}\rceil}\big)&\nonumber \\
&\leq \sum_{(b_{-k+1}^0,a)\in S_{n,k}^{(1)}}\sum_{j=\lceil n^{1-\gamma}\rceil}^{\infty} \prob(\frac{1}{j}\sum_{r=1}^j 1_{\{S_{\sigma_r^{k-1}(b_{-k+1}^0)=a}\}}-p(a\mid b_{-k+1}^0)\mid>0.5n^{-\beta}\mid S_0^{\lceil\frac{n}{2}\rceil})&\nonumber \\
&+\sum_{(b_{-k-i+1},a)\in \mathcal{S}_{n,k+i}^{(1)}}\sum_{j=\lceil n^{1-\gamma}\rceil}^\infty \prob(\mid \frac{1}{j}\sum_{r=1}^j1_{\{S_{\sigma_{r}^{k+i-1}}(b_{-k-i+1}^0)=a\}}-p(a\mid b_{-k+1}^0)\mid >0.5n^{-\beta}\mid S_0^{\lceil\frac{n}{2}\rceil})&\nonumber
\end{align}
Each of these terms represents the deviation of an empirical count from its mean. The variables under consideration are independent since whenever the block $b_{-k+1}^0$ occurs the next term is chosen using the same distribution $p(a\mid b_{-k+1}^0)$. Thus by Hoeffding's inequality for sums of bounded independent random variables and since the cardinality of both $\mathcal{S}_{n,k}^{(1)}$ and $S_{n,k+i}^{(1)}$ is not greater than $(n+2)/2$, we have the following.
\begin{align}
&\prob\big( \max\limits_{(b_{-k-i+1}^0,a)\mathcal{S}_{k+i}^n}\mid C(a\mid b_{-k+1}^0:[\lceil \frac{n}{2}\rceil,n])-C(a\mid b_{-k-i+1}^0:[\lceil\frac{n}{2}\rceil,n])\mid > n^{-\beta}\mid S_0^{\lceil\frac{n}{2}\rceil}\big)& \nonumber \\
&\leq 2\frac{n+2}{2}\sum\limits_{j=\lceil n^{1-\gamma}\rceil}^\infty 2e^{-n-2\beta j}& \nonumber
\end{align}
Thus we obtain the following.
\begin{equation}
\prob (\hat{\Delta}_k^n>n^{-\beta}\mid S_0^{\lceil\frac{n}{2}\rceil})\leq n(n+2)4e^{-2n-2\beta +1-\gamma} \nonumber
\end{equation}
Integrating both sides we get
\begin{equation}\label{eqn:finalstep}
\prob (\hat{\Delta}_k^n>n^{-\beta}\leq n(n+2)4e^{-2n-2\beta +1-\gamma}
\end{equation}
The right hand side of equation~\eqref{eqn:finalstep} is summable provided $2\beta+\gamma <1$ and the Borel-Cantelli Lemma (see Lemma~\ref{lemma:BC}) yields that $\prob(\hat{\Delta}\leq n^{-\beta})=1$. Thus $D_n\leq k$ eventually almost surely provided the process is Markov with order $k$. The proof is now complete. \QEDB
\end{proof}

The above theorem establishes a consistent estimator for the order of an unknown discrete Markov process. Once the order of the discrete Markov process is estimated, we estimate the sufficient statistics for the Markov chain from the data. This is described in detail in next section.

\section{Parameter Estimation for Markov Chains}\label{sec:Parameters}
In this section, we present some results on estimation of parameters of the Markov chain once the structure of the chain is inferred after discretization and order estimation. Given a finite-length symbol string over a (finite) alphabet $\alphabetSet$, there exists several PFSA construction algorithms to discover the underlying irreducible PFSA model, such as the D-Markov machines~\cite{R04}\cite{RR06}. Once the order of the discrete data is estimated, the states of a D-Markov machines are memory words of length equal to the order over $\alphabetSet$. The sufficient statistics for the D-Markov machine are the symbol emission probabilities conditioned on the memory words. These statistics could be estimated using a Bayesian approach. In general, the symbol emission probabilities conditioned on the individual memory words can be modeled as a multinomial random variable which can be modeled by a Dirichlet distribution. However, we skip those details here and present a very simple estimation process where every random variable has a uniform prior. 

Let $N(\symb_j|q_k)$ denote the number of times that a symbol $\symb$ is generated from the state $q_k$ as the symbol string evolves. The maximum a posteriori probability (MAP) estimate of the morph probability (see Definition~\ref{defn:PFSA}) for the PFSA $\pfsa$ is estimated by frequency counting as

\begin{equation}\label{eq:MorphProbability_MAP}
\hat{\pi}(\symb_j|q_k) \triangleq \frac{C(\symb_j|q_k)}{\sum_\ell C(\symb_\ell|q_k)}= \frac{1+ N(\symb_j|q_k)}{|\alphabetSet| + \sum_\ell N(\symb_\ell|q_k)}
\end{equation}

The rationale for initializing each element of the count matrix $C$ to $1$ is that if no event is generated at a state $q \in Q$, then there should be no preference to any particular symbol and it is logical to have $\hat{\pi}_{MAP}(q,\symb) = \frac{1}{|\alphabetSet|} \  \forall \symb \in \alphabetSet$ (i.e., the uniform distribution of event generation at the state $q$). The above procedure guarantees that the PFSA, constructed from a (finite-length) symbol string, must have an (elementwise) strictly positive morph map $\pi$. % and  that the state transition map $\delta$ in Definition~\ref{def:FSA} is a total function.

Having computed the probabilities $\hat{\pi}\left(\symb_j | q_k \right)$ for all $\symb_j \in \alphabetSet$ and $q_k \in Q$, the estimated emission probability matrix of the PFSA is obtained as
\begin{equation} \label{eq:PMatrix}
	\widehat{\mathbf{\Pi}} \triangleq \left[\begin{array}{lll}\hat{\pi}\left(\symb_1 \mid q_1 \right) & \dots & \hat{\pi}\left(\symb_{\vert \alphabetSet \vert} \mid q_1 \right) \\ \vdots & \ddots & \vdots \\ \hat{\pi}\left(\symb_1 \mid q_{\vert Q \vert} \right) & \cdots & \hat{\pi}\left(\symb_{\vert \alphabetSet \vert} \mid q_{\vert Q \vert} \right) \end{array}\right].
\end{equation}

The stochastic matrix $\widehat{\mathbf{\Pi}}$, estimated from a symbol string, is then treated as a feature of the times series data that represents the behavior of the dynamical system. These features can then be used for different learning applications like pattern matching, classification, and clustering. The estimated stochastic matrices $\widehat{\mathbf{\Pi}}$ are converted into row vectors for feature divergence measurement by sequentially concatenating the $|Q|$ rows (i.e., the $|Q|\times |\Sigma|$ matrix $\widehat{\mathbf{\Pi}}$ is vectorized as a $(1\times |Q||\alphabetSet|)$ vector). The convergence of the parameters of the Markov model could be established by Glivenko-Cantelli theorem~\cite{V98, V13} which guarantees the convergence of empirical distribution of random variables under independent observations. 

\begin{rem}
As compared to the Hidden Markov model-based approaches, this approach presents a much simpler modeling technique where the structure of the model is inferred based on the observations. As the structure of the Markov model is also fixed by the observations, this makes inference of the parameters of the Markov chain much more easier than the standard Hidden Markov model approaches like the Viterbi algorithm~\cite{B06} where Dynamic programming is used to estimate the parameters that maximize the likelihood of the observations.
\end{rem}

The sufficient statistics for the inferred Markov process serve as a compact stochastic representation of the signal and can be used for different machine learning applications like pattern matching, classification, clustering, anomaly detection. As explained earlier, the symbol emission probabilities conditioned on the memory words for the discrete process could be modeled as random variables. A metric to measure the information gain over the marginal symbol emission probabilities could be defined as follows:
\begin{equation}\label{eqn:discrepancy}
\mathfrak{d}_\theta = \sum_{q\in Q} \prob(q) D_{KL}(P(\mathcal{A}\mid q)\| \tilde{P}({\mathcal{A}}))
\end{equation}
where, $\theta$ represents the parameters of the Markov model inference, i.e., the partitioning map and the estimated order of the memory words. The term in equation~\eqref{eqn:discrepancy} measures the measures the discrepancy in the statistics of the symbol emission probabilities when they are conditioned on the memory words as compared to the unconditional statistics for the symbol emission probabilities. 

\section{Statistical Learning Applications}
In this section, we present a case study using the present framework for statistical learning for applications like anomaly detection, classification and prognostics in a complex engineering system. While there are several examples presented in open literature~\cite{jha2018symbolic, li2016information, sarkar2015dynamic, seto2016data, virani2019sequential, jha2013classification, virani2013dynamic, li2017information, li2015feature}, we will only show one example here for clarity. We present an example of complex thermoacoustic instability in gas turbine engines during combustion. Combustion instability is a highly nonlinear and complex phenomena which results in severe structural degradation in jet turbine engines. Some good surveys on the current understanding of the mechanisms for the combustion instability phenomena could be found in~\cite{OAL15, SSDC03, CDSBM14, HY09, MBDSC12}. Active combustion instability control (ACIC) with fuel modulation has proven to be an effective approach for reducing pressure oscillations in combustors~\cite{BMJK06, BMH07}. Based on the work available in literature, one can conclude that the performance of ACIC is primarily limited by the large delay in the feedback loop and the limited actuator bandwidth ~\cite{BMJK06, BMH07}. From the perspective of active control of the unstable phenomena, it is necessary to accurately detect and, desirably, predict the states of the combustion process. In this paper, we present Markov modeling of pressure time-series during combustion and present results on changes in the underlying model structure and complexity as the process undergoes some changes in its physical properties. The goal is to be able to design a statistical model for the instability phenomenon during combustion which could be used to design a statistical filter to accurately predict with high confidence the system states. This can potentially alleviate the problems with delay in the ACIC feedback loop and thus possibly improve the performance. We first present some experimental details of the set-up that was used to collect the experimental data and then, show the Markov modeling of the pressure time-series data which is able to capture the changes in system behavior.

\subsection{Experimental Details}\label{sec:experiment}

In this section we present the experimental details for collecting data to analyze the complex non-linear phenomena that occurs during the instability phenomena, in a laboratory-scale combustor. A swirl-stabilized, lean-premixed, laboratory-scale combustor was used to perform the experimental study. Figure~\ref{fig:testrig} shows a schematic drawing of the variable-length combustor. The combustor consists of an inlet section, an injector, a combustion chamber, and an exhaust section. The combustor chamber consists of an optically-accessible quartz section followed by a variable length steel section.

\begin{figure} %Fig02
	\centering \vspace{-6pt}
	\includegraphics[width=1.0\textwidth]{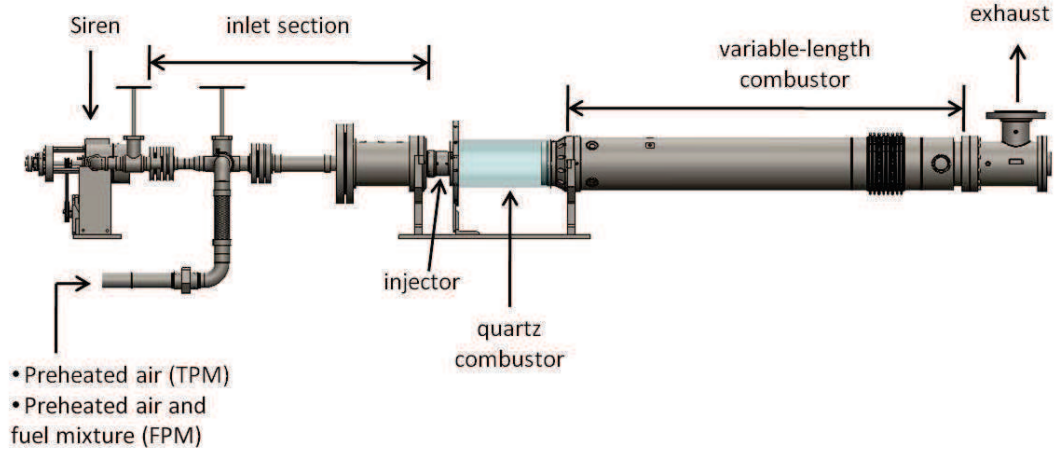}
	\caption{Schematic drawing of test facility}
	\label{fig:testrig}
\end{figure}

\begin{table}[!hbp]
	\centering
	\caption{Operating conditions}
	\begin{tabular}{c|c}
		\hline
		\textbf{Parameters} & \textbf{Value} \\
		\hline
		Equivalence Ratio & 0.525, 0.55, 0.60, 0.65\\
		\hline
		Inlet Velocity & 25-50 m/s in 5 m/s increments \\
		\hline
		Combustor Length & 25-59 inch in 1 inch increments\\
		\hline
	\end{tabular}
	\label{tab:par}
\end{table}

High pressure air is delivered to the experiment from a compressor system after passing through filters to remove any liquid or particles that might be present. The air supply pressure is set to 180 psig using a dome pressure regulator. The air is pre-heated to a maximum temperature of $250\,^{\circ}{\rm C}$ by an 88kW electric heater. The fuel for this study is natural gas (approximately $95$\% methane). It is supplied to the system at a pressure of 200 psig. The flow rates of the air and natural gas are measured by thermal mass flow meters. The desired equivalence ratio and mean inlet velocity is set by adjusting these flow rates with needle valves. For fully pre-mixed experiments (FPM), the fuel is injected far upstream of a choke plate to prevent equivalence ratio fluctuations. For technically pre-mixed experiments (TPM), fuel is injected in the injector section near the swirler. It mixes with air over a short distance between the swirler and the injector exit. Tests were conducted at a nominal combustor pressure of 1 atm over a range of operating conditions, as listed in Table~\ref{tab:par}.
%Fuel injection locations are shown schematically in Figure~\ref{fig:fuelinj}.
%\begin{figure*}[thb] %Fig01
%\centering
%\includegraphics[width=0.5\textwidth]{Figures/fuelinj.eps}
%\caption{Fuel injection locations for fully premixed and technically premixed experiments}
%\label{fig:fuelinj}
%\end{figure*}

 In each test, the combustion chamber dynamic pressure and the global OH and CH chemiluminescence intensity were measured to study the mechanisms of combustion instability. The measurements were made simultaneously at a sampling rate of 8192 Hz~(per channel), and data were collected for 8 seconds, for a total of 65536 measurements~(per channel).

\subsection{Markov Modeling and Results}\label{sec:results}
In this section, we present details of the analyses done using the pressure time-series data to infer the underlying Markov model. Time-series data is first normalized by subtracting the mean and dividing by the
standard deviation of its elements; this step corresponds to bias removal and
variance normalization. Data from engineering systems is typically oversampled
to ensure that the underlying dynamics can be captured. Due to coarse-graining from the symbolization process, an over-sampled
time-series may mask the true nature of the system dynamics in the symbolic
domain (e.g., occurrence of self loops and irrelevant spurious transitions in
the Markov chain). Time-series is first down-sampled to find the next crucial
observation. The first minimum of auto-correlation function generated from the
observed time-series is obtained to find the uncorrelated samples in time. The
data sets are then down-sampled by this lag. To avoid discarding significant
amount of data due to downsampling, down-sampled data using different initial
conditions is concatenated. Further details of this preprocessing can be found
in~\cite{Srivastav2014}.

The continuous time-series data set is then partitioned using maximum entropy
partitioning (MEP), where the information rich regions of the data
set are partitioned finer and those with sparse information are partitioned
coarser. In essence, each cell in the partitioned data set contains
(approximately) equal number of data points under MEP. A ternary alphabet with
$\alphabetSet=\{0,1,2\}$ has been used to symbolize the continuous combustion
instability data. As discussed in section~\ref{sec:experiment}, we analyze data sets from different phases, as the process goes from stable through the transient to the unstable region (the ground truth is decided using the RMS-values of pressure).

In figure~\ref{fig:datadistribution}, we show the observed changes in the behavior of the data as the combustion operating condition changes from stable to unstable. A change in the empirical distribution of data from unimodal to bi-modal is observed as the system moves from stable to unstable. We selected $150$ samples of pressure data from the stable and unstable phases each to analyze and compare. The temporal memory of the individual series is estimated by the spectral decomposition method presented earlier in~\cite{Srivastav2014, JSMR15}. First, we compare the expected size of temporal memory during the two stages of operation. There are changes in the Eigen value decomposition rate for the 1-step stochastic matrix calculated from the data during the stable and unstable behavior, irrespective of the combustor length and inlet velocity. During stable conditions, the Eigen values very quickly go to zero as compared to the unstable operating condition (see Figure~\ref{fig:spectralprop}). This suggests that the size of temporal memory of the discretized data increases as we move to the unstable operating condition. This indicates that under the stable operating condition, the discretized data behaves as symbolic noise as the predictive power of Markov models remain unaffected even if we increase the order of the Markov model. On the other hand, the predictive power of the Markov models can be increased by increasing the order of the Markov model during unstable operating condition, indicating more deterministic behavior. An $\epsilon=0.05$ is chosen to estimate the depth of the Markov models for both the stable and unstable phases. Correspondingly, the depth was calculated as $2$ and $3$ for the stable and unstable conditions (see Figure~\ref{fig:databehavior}). The corresponding $D(\epsilon)$ is used to construct the Markov models next. First a PFSA whose states are words over $\alphabetSet$ of length $\depth(\epsilon)$ is created and the corresponding maximum-likely parameters ($\emMat$ and $\trMat$) are estimated.

\begin{figure*}
	\centering
	\subfloat[Probability density function for the pressure time series data]{\includegraphics[width=0.40\textwidth]{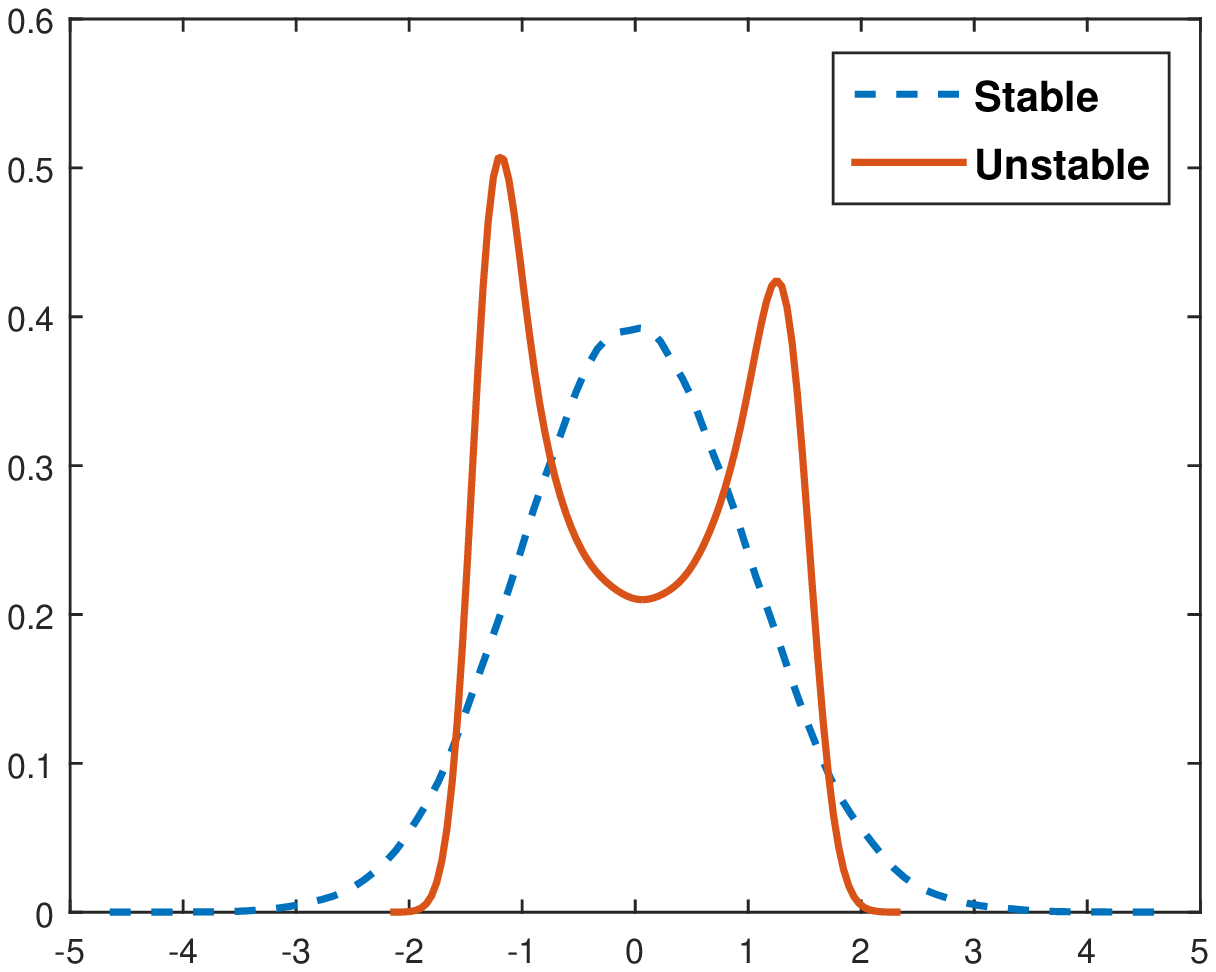}\label{fig:datadistribution}}\quad
	\subfloat[Spectral decomposition of the stochastic matrix for 1-step Markov model]{\includegraphics[width=0.40\textwidth]{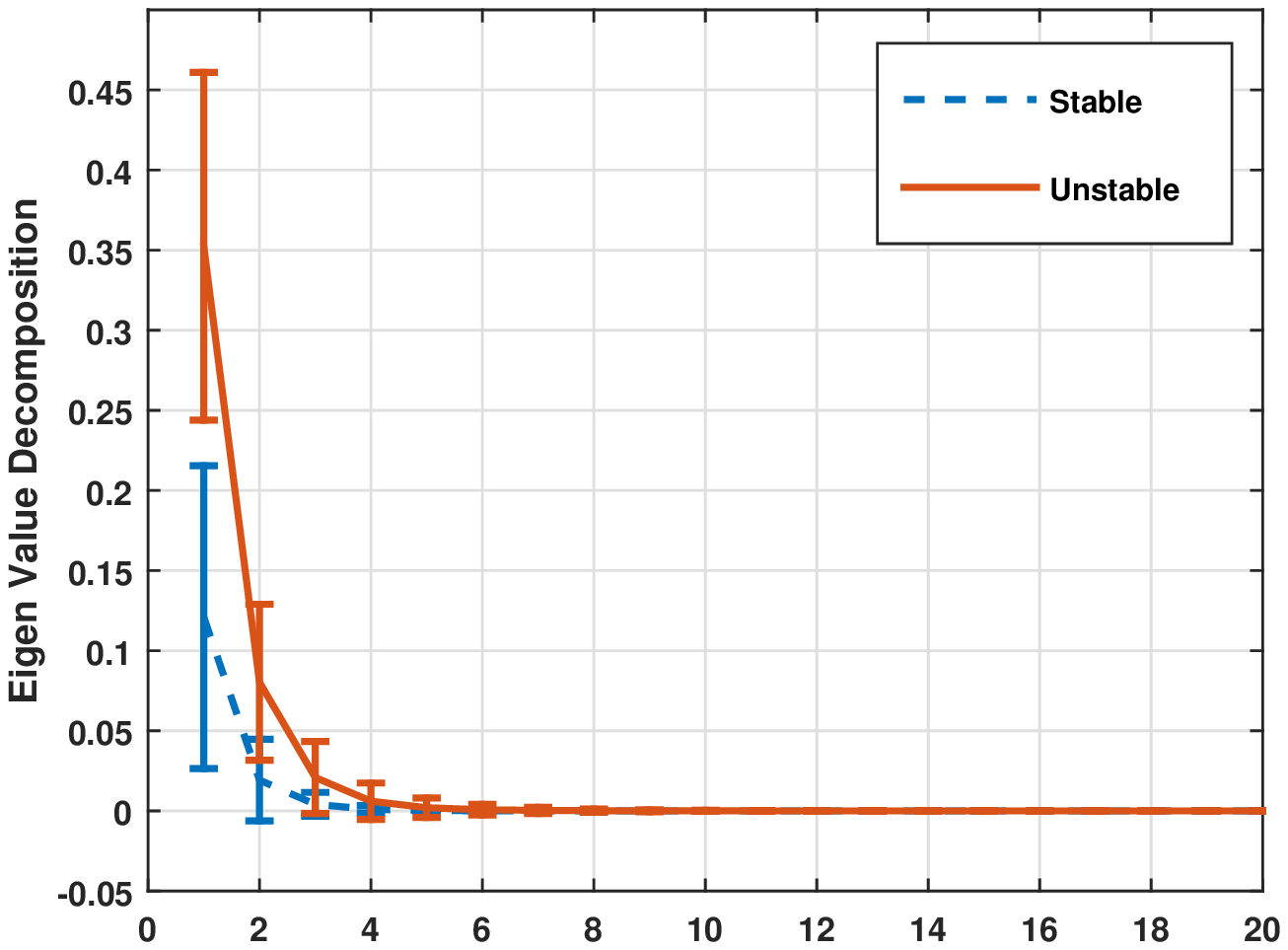}\label{fig:spectralprop}}\\
	\caption{The first plate in the above figure shows the change in the empirical density calculated for the pressure time-series data as the process deviates from the stable operating condition to unstable operating condition. The second plate shows the spectral decomposition of the 1-step stochastic matrix for the data under stable and unstable operating conditions.}
	\label{fig:databehavior}\vspace{-2pt}
\end{figure*}

\begin{figure*}[thb] %Fig02
	\centering \vspace{-6pt}
	\includegraphics[width=0.7\textwidth]{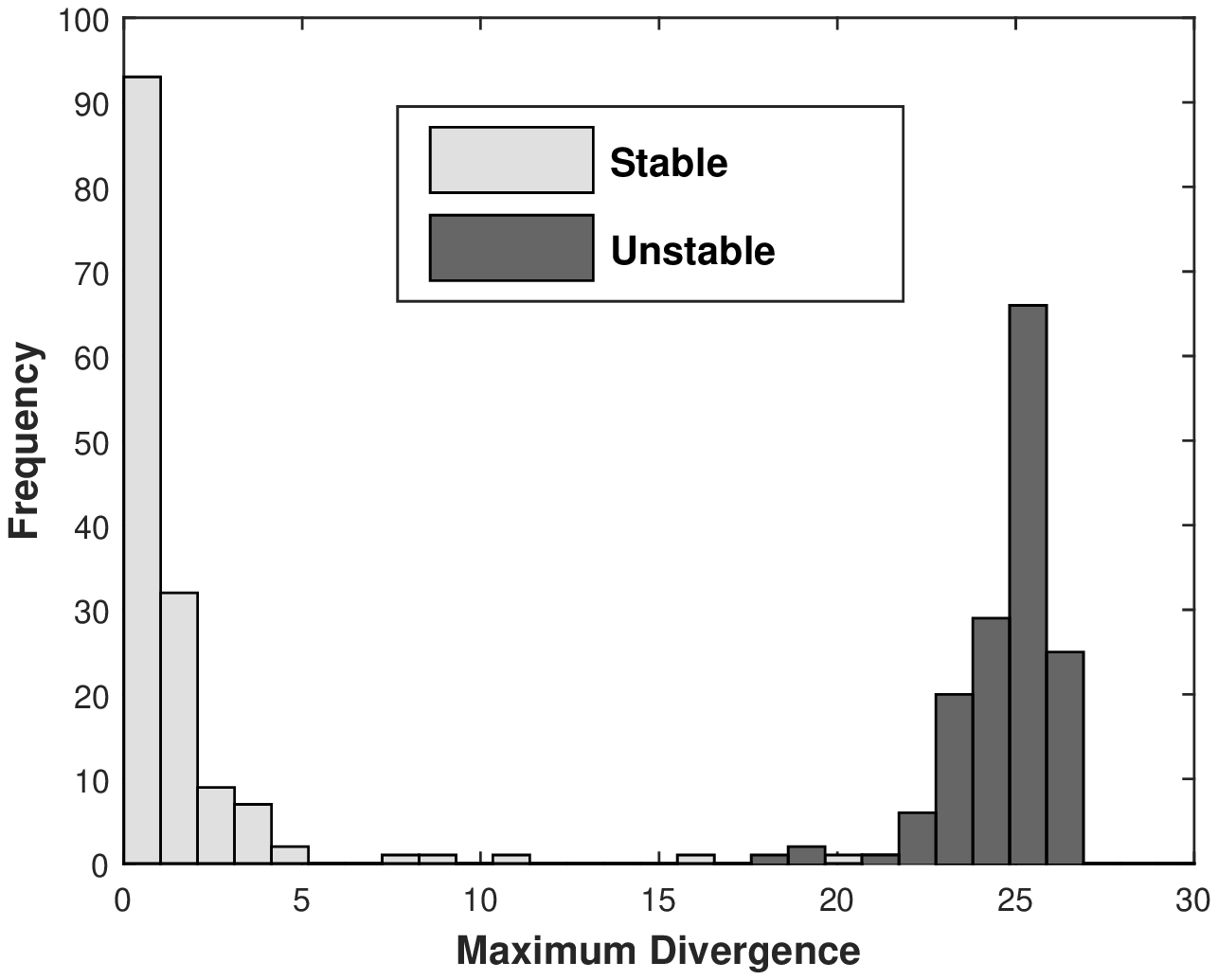}
	\caption{Class-separability of the stable and unstable classes independent of any other related variables.}
	\label{fig:divergence}
\end{figure*}
Once the model structure is inferred and the model parameters are estimated from the data, we estimate certain model-specific metrics to see changes in the inferred models. In particular, we are interested in inferring the changes in the model complexity as the process moves from stable to unstable. The size of temporal memory can also be treated as a metric for complexity of the underlying Markov model. However, we define another metric based on the KL-distance between the states of the Markov model. In particular, we estimate the following. 
\begin{equation}\label{eqn:maxdivergence}
d_{\mathcal{M}}=\max\limits_{q_i,q_j\in\stateSet}d(q_i,q_j) 
\end{equation}
where $d(q_i,q_j)= D_{\textrm{KL}}(\prob(\alphabetSet|\state_i)\|\prob(\alphabetSet|\state_j))+D_{\textrm{KL}}(\prob(\alphabetSet|\state_j)\|\prob(\alphabetSet|\state_i))$, i.e., the symmetric KL distance between states $q_i$ and $q_j$ based on the conditional symbol emission probabilities.  In equation~\eqref{eqn:maxdivergence}, we measure the maximum divergence in the set of states; however, another possible metric could be an expectation over the set of states. The behavior of the metric described in equation~\eqref{eqn:maxdivergence} is shown in figure~\ref{fig:divergence}. It is clear that the proposed metric is able to achieve a clear separation of the two classes of interest (see Figure~\ref{fig:divergence}).

Next we show the gradual change in the model characteristics and compare it with the behavior of the RMS of pressure during this process. It is noted at this point that all the analysis is done in an unsupervised fashion, so this falls under the broad category of unsupervised learning or anomaly detection. As shown in Figure~\ref{fig:chnagescomb}, we show the different behaviors of the model as the system moves from the stable to unstable behavior. It shows presence of a transient phase before sticking to the unstable region. We present results which is calculated using equation~\eqref{eqn:discrepancy} in figure~\ref{fig:discrepancy}. It is interesting to see that the symbol emission probabilities are quite close to the conditionals during the stable operating condition, indicating symbolic noise behavior. While gradually there is an observed discrepancy between the conditional and marginal distribution of the symbol emissions that saturates as the system moves to the unstable behavior. Thus the current formalism is able to infer the underlying changes in the model structures as well as the parameters in an unsupervised fashion. This is encouraging as these changes can be associated with the changes in the physical dynamics during the complex process and thus, gives a high-fidelity statistical model for the process.
\begin{figure*}
	\centering
	\subfloat[Chnages in the complexity of the model inferred from data]{\includegraphics[width=0.45\textwidth]{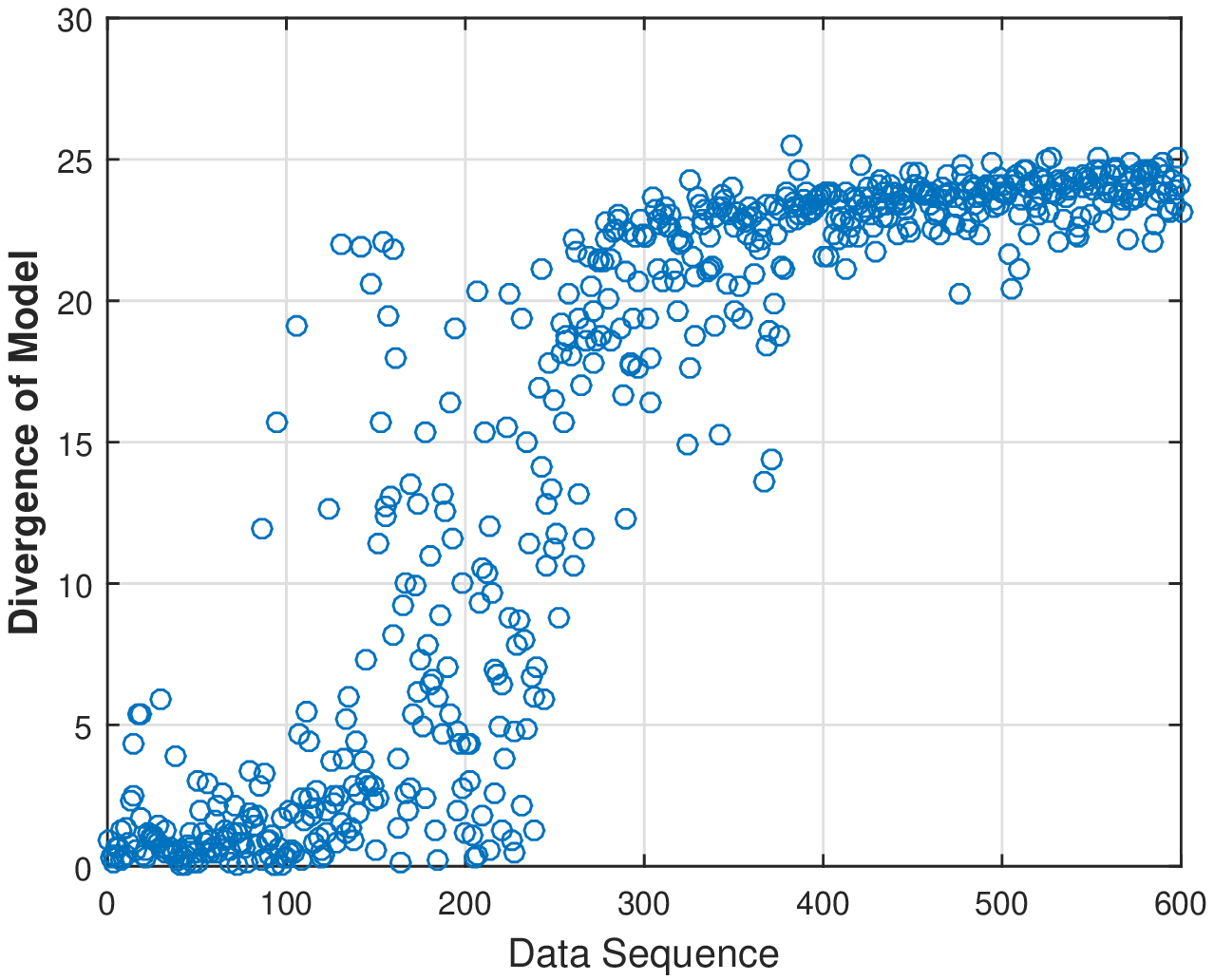}\label{fig:complexorig}}\quad
	\subfloat[The RMS of the pressure time series corresponding to the different models showing the range of stable and unstable behavior]{\includegraphics[width=0.45\textwidth]{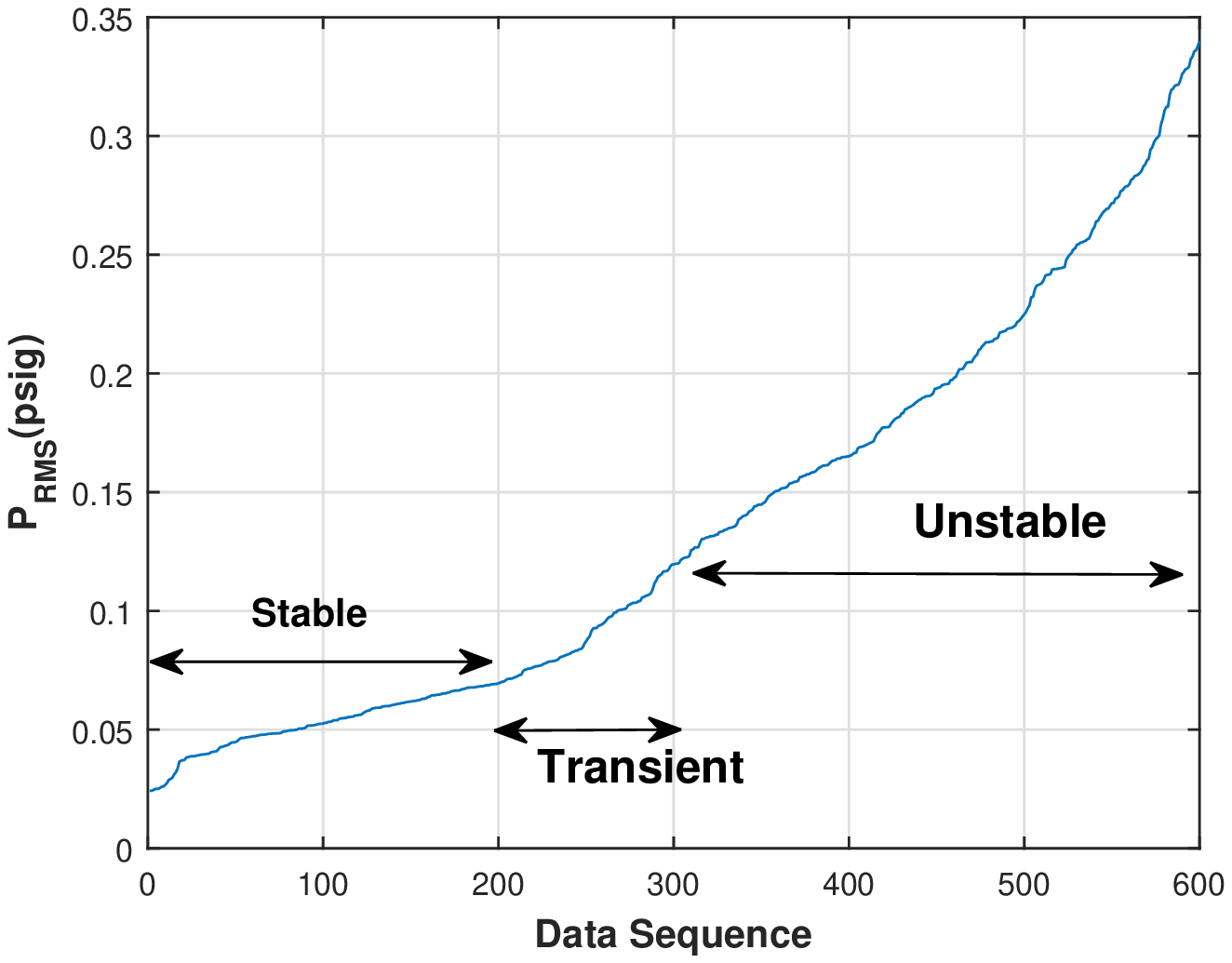}\label{fig:complexfinal}}\\
	\caption{Change in the Markov models of the pressure time-series data during the combustion process}
	\label{fig:chnagescomb}\vspace{-2pt}
\end{figure*}

\begin{figure*}[thb] %Fig02
	\centering \vspace{-6pt}
	\includegraphics[width=0.7\textwidth]{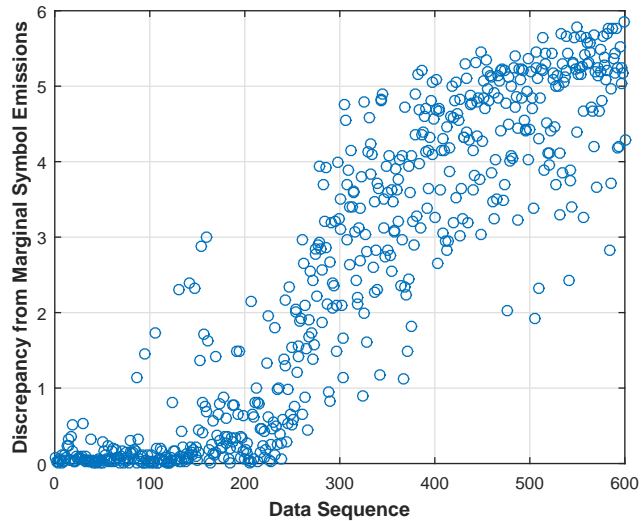}
	\caption{Change in the discrepancy of the symbol emission probabilities indicating the information gain by the Markov models}
	\label{fig:discrepancy}
\end{figure*}

The results presented in the above plots are inspiring as we see that the models corresponding to different behaviors are clustered together, irrespective of the other variables (like equivalence ratio, length of combustor, etc.). Thus the Markov models are able to capture features intrinsic to the governing physical dynamics independent of the other variables and thus serves as a good representation of the pressure signals obtained during the process.
\section{Conclusions and Future Work}\label{sec:Conclusions}
In this paper we presented the concept of Markov modeling of time-series data using Symbolic analysis. Symbolic analysis-based Markov modeling is a recently introduced statistical modeling approach where the data is first discretized and then, the discrete time-series data is approximated as a finite order Markov model. Compared to the widely-used Hidden Markov models, the present concept is algorithmically simple and thus could be easily inferred for machine learning applications. This is specially useful for embedded applications where using Hidden Markov model inference methods like Viterbi algorithm is computationally expensive and thus might be infeasible to use. 

We discussed that  the efficacy of the proposed approach depends on the characteristics of the two processes of discretization and Markov modeling of discrete processes. In this paper, we focused on the two critical steps in the modeling process namely, discretization and order estimation. We visited the key concepts of Markov partitioning from the dynamical systems literature and discussed some properties of the same. We also presented an order estimation technique for discrete symbolic process and showed the consistency of the estimation which guarantees almost sure convergence to the true order of the Markov chain. Use of current mathematical formalism for partitioning or discretization of data with unknown model is a very challenging problem. The results in dynamical systems theory is insufficient for use in machine learning applications. On the other hand, there is no rigorous statistical analysis of the discretization problem available in open literature. The order estimation problem for discrete stochastic processes is more rigorously studied in information theory literature. Consistency of some approaches have been rigorously established. However, for the statistical learning problems using STSA framework, the composite process of discretization and modeling of discrete process needs to be studied together. We also presented a case study of statistical learning for prognostics and anomaly detection in combustion process which is a complex thermoacoustic phenomena with undesirable effects. The Markov modeling approach is able to identify and represent changes in the signals of pressure time-series as the underlying physical process undergoes some change.

One possible direction for research is to find a discretization technique which results in order one discrete Markov process. Proving such a discretization always exists might be possible under stationarity conditions for systems with fading memory. For example, the metric introduced in equation~\eqref{eqn:discrepancy} provides a measure to describe the discrepancy between the independent statistics of the symbols versus the statistics when conditioned on the states of a probable Markov chain. Maximizing such a discrepancy provides a way to maximize the information gain obtained by $1^{st}$-order Markov model for the underlying data. However, further investigation is required to answer various related questions in this regard. We state the problem more formally next. \\
\textbf{Problem 1.} Let the time-series data be denoted as the sequence $\{X_t\}_{t \in \natno}$ where $X_t \in \Omega \subseteq \real^d$ and let $\varphi$ represent the partitioning function such that $\varphi: X_t \mapsto a_t$ where $a_t \in \alphabetSet$ for all $t \in \natno$ and $\mid\alphabetSet \mid \in \natno$ is known and fixed. Then the partitioning function is completely determined by the set $\mathcal{R}_\varphi =\{R_0,R_1,\dots, R_{|\alphabetSet|}\}$ such that $\Omega= \overline{R_0}\cup \overline{R_1}\cup\dots \cup \overline{R_{| \alphabetSet|}}$. Let us assume that the possible family of sets lies in a set denoted by $\Omega_\varphi$. An information gain for the Markov model could be measured by the following equation.
\begin{equation}\label{eqn:partitionIG}
d_{\mathcal{R}_\varphi} = \sum_{q\in Q} \prob(q) D_{KL}(P(\mathcal{A}\mid q)\| \tilde{P}({\mathcal{A}}))
\end{equation}
where the measure is parameterized by the set $\mathcal{R}_\varphi$ which depends on the partitioning function $\varphi$ and the set $\stateSet$ represents the finite memory-words of the discrete symbol sequence. The above equation measures the information gain by creating a Markov model, where we measure the symbol emission probabilities conditioned on the memory words (or states) of the Markov model, over the independent or marginal symbol emission probabilities. Another interpretation is that equation~\eqref{eqn:partitionIG} describes the discrepancy between the statistics of $\{s_t\}_{t \in \natno}$ when modeled as independent sequence versus when modeled as a stationary Markov process. Then, a partitioning to optimize this measure may capture the true temporal behavior of $\{X_t\}_{t \in \natno}$. The problem is to obtain the parameters of the related optimization problem.
\begin{equation}
\mathcal{R}_\varphi^\star=\arg\max\limits_{\mathcal{R}_\varphi \in \Omega_\varphi} \sum_{q\in Q} \prob(q) D_{KL}(P(\mathcal{A}\mid q)\| \tilde{P}({\mathcal{A}})) \nonumber
\end{equation}
The following questions need to be answered to characterize $\mathcal{R}_{\varphi}^\star$.
\begin{itemize}
\item Is $\mathcal{R}_{\varphi}^\star$ unique? Under what conditions of the underlying process $\{X\}_{t \in \natno}$, can we get an unique solution?
\item What is the order of the corresponding discrete time-series obtained for $\mathcal{R}_{\varphi}^\star$? 
\item Let us assume that the pre-image of a symbol is represented by the centroid of the set in $\mathcal{R}_\varphi^\star$ in the original phase-space. Then, how can we characterize the metric $\zeta_\varphi = \sum_{t\in \natno}d({\varphi}^{-1}(s_t),X_t)$ for signal representation? Is $\mathcal{R}_\varphi^\star$ able to minimize this metric over $\Omega_{\varphi}$?
\item Now imagine that the size of partitioning set is allowed to vary. Then, how to assign a MDL score to the individual models for different sizes of the partitioning set and how do we select a final model for signal representation? 
\end{itemize}

The above problem would, thus, try to formulate and characterize the properties of a partition for a data set for Markov representations. In the next question, we will try to study the composite problem of signal representation by partitioning followed by order estimation. \\
\textbf{Problem 2.} Let the time-series data be denoted as the sequence $\{X_t\}_{t \in \natno}$ where $X_t \in \Omega \subseteq \real^d$ and let $\varphi$ represent the partitioning function such that $\varphi: X_t \mapsto a_t$ where $a_t \in \alphabetSet$ for all $t \in \natno$ and $\mid\alphabetSet \mid \in \natno$ is unknown. A desirable way to characterize a partitioning process is by predicting its effect on the size of temporal memory of the system. Is it possible to synthesize a partitioning function $\varphi_I$ which preserves the memory of the discrete system under the transformation $\varphi_I$, i.e., if $\prob(X_t\mid X_{t-D},\dots , X_{t-1})=\prob(X_t\mid X_{t-1},\dots)$ then we have $\prob(\varphi_I(X_t)\mid \varphi_I(X_{t-D}),\dots ,\varphi_I( X_{t-1}))=\prob(\varphi_I(X_t)\mid \varphi_I(X_{t-1}),\dots)$. Then, how can we characterize a system for which such a discretization is guaranteed to exist? It is noted that the size of partitioning set is unknown and not fixed. 

These two problems could be treated as fundamental problems that need to be studied for mathematical characterization of data-driven modeling of systems from a symbolic analysis perspective. They are mainly concerned about inference of model structure for statistical learning. There are some problems which are of interest from applications perspective and related to estimation of various parameters during modeling and inference. However, they are not being presented here.
%Based on the discussions presented in this paper, future work will focus on further investigation of the discretization and order estimation process. In particular, future work will explore some techniques for Markov partitioning of time-series data. Furthermore, currently, these two are studied separately neglecting the effect of discretization on the order of the discrete sequence. 
%%
\clearpage
\bibliographystyle{ieeetr}
\bibliography{MathMAPaper}

\end{document}